\newtheorem{theorem}{Theorem}
\newtheorem{coro}{Corollary}
\newtheorem{lemma}{Lemma}
\newtheorem{remark}{Remark}
\newtheorem{definition}{Definition}
\newtheorem{proposition}{Proposition}
\newtheorem{assumption}{Assumption}
\DeclareMathOperator{\diag}{diag}  
\DeclareMathOperator{\tr}{tr}
\DeclarePairedDelimiter\floor{\lfloor}{\rfloor}
\def\BibTeX{{\rm B\kern-.05em{\sc i\kern-.025em b}\kern-.08em
    T\kern-.1667em\lower.7ex\hbox{E}\kern-.125emX}}
\begin{document}
\title{Learning Dynamical Systems by Leveraging Data from Similar Systems}
\author{~Lei~Xin, Lintao Ye, George Chiu, Shreyas Sundaram  
\thanks{This research was supported by USDA grant 2018-67007-28439.  This work represents the opinions of the authors and not the USDA or NIFA. Lei Xin is with the Department of Computer Science and Engineering, The Chinese University of Hong Kong. Email: {\tt lxinshenqing@gmail.com}. Shreyas Sundaram is with the Elmore Family School of Electrical and Computer Engineering, Purdue University. George Chiu is with the School of Mechanical Engineering, Purdue University. E-mails: {\tt\{sundara2,gchiu\}@purdue.edu}. Lintao Ye is with the School of Artificial Intelligence and Automation, Huazhong University of Science and Technology. E-mail: {\tt yelintao93@hust.edu.cn}.}}

\maketitle

\begin{abstract}
We consider the problem of learning the dynamics of a linear system when one has access to data generated by an auxiliary system that shares similar (but not identical) dynamics, in addition to data from the true system. We use a weighted least squares approach, and provide finite sample error bounds of the learned model as a function of the number of samples and various system parameters from the two systems as well as the weight assigned to the auxiliary data. We show that the auxiliary data can help to reduce the intrinsic system identification error due to noise, at the price of adding a portion of error that is due to the differences between the two system models. We further provide a data-dependent bound that is computable when some prior knowledge about the systems, such as upper bounds on noise levels and model difference, is available. This bound can also be used to determine the weight that should be assigned to the auxiliary data during the model training stage. 
\end{abstract}

\begin{IEEEkeywords}
System identification, machine learning, sample complexity
\end{IEEEkeywords}

\section{Introduction} 

\label{sec: introduction}
Building an accurate predictive model for a dynamical system is crucial in various fields, including control theory, reinforcement learning, and economics \cite{ljung1999system}. The problem of system identification seeks to learn the system model from data when modeling from first principles is not possible. While classical system identification techniques and their associated theories focused primarily on achieving asymptotic consistency \cite{bauer1999consistency,jansson1998consistency,knudsen2001consistency}, recent efforts have sought to characterize the performance of the learned model given a finite number of samples, inspired by advances in machine learning. Having a finite sample bound for the estimation error is not only of interest on its own, but also can be integrated with techniques like robust control to come up with overall performance guarantees for the closed loop system, e.g., \cite{dean2018regret,dean2019sample,ye2021sample}.

The existing literature on finite sample analysis of system identification is typically either single trajectory-based or multiple trajectories-based. The single trajectory setup assumes that one has samples from a single trajectory from the system, which enables system identification in an online manner, i.e., there is no need to restart the system from an initial state.  This setup has been studied extensively over the past few years and is still an ongoing research topic \cite{simchowitz2018learning,oymak2019non,simchowitz2019learning,sarkar2021finite,faradonbeh2018finite,sarkar2019near}. A key challenge in the analysis is addressing the dependencies of samples from the single trajectory. The derived sample complexity results typically show how the system identification error goes to zero by increasing the number of samples used in the single trajectory. For the multiple trajectories setup, it is typically assumed that one has access to data generated from multiple independent trajectories \cite{dean2019sample, fattahi2018data, sun2020finite, zheng2020non, xin2022learning}. In practice, the multiple trajectories setup has the advantage of being able to handle unstable systems, and other cases where collecting a single long trajectory is infeasible. Technically, the assumption of independence of data 
usually allows for more direct use of standard concentration inequalities. Consequently, the derived results typically only show that the error goes to zero by increasing the number of trajectories. The recent paper \cite{tu2022learning} carefully addresses learning dynamical systems from a mix of both dependent and independent data, i.e., learning from multiple trajectories each with a non-trivial length. The paper \cite{tu2022learning} provides sharp bounds that hold in expectation, and shows that the error goes to zero at a rate that is determined by the product of the number of trajectories and the number of samples used from these trajectories. 

We note that all of the above works make the assumption that the data used for system identification are generated from the true system model that one wants to learn. However, in many cases, collecting abundant data from the true system can be costly or even infeasible. In such cases, one may want to rely on data generated from other systems that share similar dynamics. For example, for non-engineered systems like animals, one may only have a limited amount of data from the true individual animal one wants to model, due to the challenge of conducting experiments for such systems. On the other hand, it may be possible to collect data from other animals in the herd or from a reasonably good simulator. Furthermore, when the dynamics of a system changes (e.g., due to failures), one needs to decide whether to discard all of the previous data, or to leverage the old information in an appropriate way. In settings such as the ones described above, it is of great interest to understand how one can leverage the data generated from systems that share similar (but not identical) dynamics. This idea is similar to the notion of {\it transfer learning} in the machine learning community, where one wants to transfer knowledge from related tasks to a new task \cite{pan2009survey}. However, in contrast to system identification, most of the papers on transfer learning (in the context of estimation) consider learning a static mapping from a feature space to a label space \cite{bastani2021predicting}. The recent works \cite{modi2021joint,faradonbeh2022joint} study joint learning of multiple dynamical systems, assuming all systems are weighted equally in the training stage. However, an open question remains on how to effectively utilize samples from other systems to enhance the accuracy of the model for the true system of interest, especially when the number of samples from the true system is limited.

Our conference paper \cite{xin2022identifying} provides finite sample analysis of system identification with the help of an auxiliary system, using a weighted least squares approach, under the assumption of having access to multiple trajectories from both the true system and the auxiliary system. The paper \cite{xin2022identifying} decomposes the overall system identification error into the error due to noise and the error due to model difference, and shows that the auxiliary data can help to reduce the error due to noise by introducing a portion of constant error that is due to the difference in the models between the true and auxiliary systems. However, although the algorithm in \cite{xin2022identifying} uses all samples from these trajectories (different from \cite{dean2019sample}, where only two data points from each trajectory are used, assuming all samples are generated from the same system), the result is conservative in characterizing the effect of the trajectory length. In particular, the error due to noise can only go to zero by increasing the number of trajectories from the two systems. 


In this paper, we address the above problem. Our contributions are as follows. 

\begin{itemize}
  \item We provide finite sample data-independent bounds for learning dynamical systems by leveraging data from an auxiliary system, using a weighted least squares approach. Again, we decompose the error into a portion due to noise and a portion due to model difference. Different from \cite{xin2022identifying}, we show that the error due to noise can go to zero by increasing either the number of trajectories or the trajectory length from the two systems, or both. Our analysis is general in that when the two systems have same system matrices (such that we only have the error due to noise), our result qualitatively matches the results in the recent paper \cite{tu2022learning}, which characterizes how the expected error goes to zero with respect to the number of trajectories and the trajectory length, given samples from the same system. 
 Importantly, our bounds provide insights on general guidelines for assigning weights to the auxiliary system, when there is not enough prior knowledge about the systems.

  \item We also provide a data-dependent bound that is computable when some prior knowledge about the systems, such as upper bounds on noise levels and model difference, is available (based on a regularized weighted least squares approach). The data-dependent bound can be used in a data-driven scheme for selecting a good weight parameter that provides better performance guarantees in practice.
  
\end{itemize}

To the best of the authors' knowledge, our paper is the first to study finite sample analysis for weighted least squares-based system identification given different systems. Technically, we overcome the challenges of addressing the dependencies of samples from independent system trajectories in a less conservative way, compared to \cite{xin2022identifying, dean2019sample}. 
We also provide a new lower bound for the smallest eigenvalue of the sample covariance matrix for non-Gaussian time series in a more general context. This result could be of independent interest since it can be used in the analyses of many regression-based problems.

Our paper is organized as follows. Section \ref{sec: notation and terminology} introduces relevant mathematical notation and terminology. Section \ref{sec: problem formulation} formulates the system identification problem and introduces the algorithm we use. In Section \ref{Analysis}, we present our main results. We present various numerical examples capturing different scenarios in Section \ref{exp} to illustrate our results, and conclude in Section \ref{conclusion}. All of the proofs are included in the appendix.

\section{Mathematical Notation and Terminology} 

\label{sec: notation and terminology}
Let $\mathbb{R}$ denote the set of real numbers. The symbol $\cup$ is used to denote the union of sets. Let $\lambda_{min}(\cdot)$ and $\lambda_{max}(\cdot)$ be the smallest and largest eigenvalues, respectively, of a symmetric matrix. The spectral radius of a given matrix is denoted as $\rho(\cdot)$. A square matrix $A$ is called strictly stable if $\rho(A)<1$, marginally stable if $\rho(A)\leq1$, and unstable if $\rho(A)>1$. We use $\|\cdot\|$ and $\|\cdot\|_{F}$ to denote the spectral norm and Frobenius norm of a given matrix, respectively. Vectors are treated as column vectors, and the symbol $'$ is used to denote the transpose operator. We use $\tr(\cdot)$ to denote the trace of a given matrix. We use $I_{n}$ to denote the identity matrix with dimension $n \times n$. The symbol $\sigma(\cdot)$ is used to denote the sigma field generated by the corresponding random vectors. We use $\mathcal{S}^{n-1}$ to denote the unit sphere in $n$-dimensional space. 



\section{Problem formulation and algorithm} 
\label{sec: problem formulation}
Consider the following discrete time linear time-invariant (LTI) system
\begin{equation}
\begin{aligned} 
\bar{x}_{t+1}=\bar{A}\bar{x}_{t}+\bar{B}\bar{u}_{t}+\bar{w}_{t}, \\
\end{aligned}
\label{eq:True system}
\end{equation}
where $\bar{x}_{t}\in \mathbb{R}^{n}$, $\bar{u}_{t}\in \mathbb{R}^{p}$, $\bar{w}_{t}\in \mathbb{R}^{n}$, are the state, input, and process noise, respectively, and $\bar{A}\in \mathbb{R}^{n \times n}$ and $\bar{B}\in \mathbb{R}^{n \times p}$ are the system matrices we wish to learn from data. 
In this paper, we also assume that both the input $\bar{u}_{t}$ and state $\bar{x}_{t}$ can be perfectly measured. 

Suppose that we have access to $N_{r}$ independent experiments of system \eqref{eq:True system}, in which the system restarts from an initial state $\bar{x}_{0}$, and each experiment is of length $T_{r}$. We call the state-input pairs collected from each experiment a {\it rollout} (or trajectory), and denote the set of samples we have as $\{(\bar{x}^{i}_{t},\bar{u}^{i}_{t}):1 \leq i \leq N_{r},0 \leq t \leq T_{r}\}$. Note that we use the superscript to denote the rollout index and the subscript to denote the time index.

Let $\bar{z}^{i}_{t}=\begin{bmatrix} \bar{x}^{i'}_{t}& \bar{u}^{i'}_{t} \end{bmatrix}^{'}\in \mathbb{R}^{n+p}$ for $t\geq 0$. For each rollout $i$, define $\bar{X}^{i}=\begin{bmatrix}
\bar{x}^{i}_{1}&\cdots&\bar{x}^{i}_{T_{r}}
\end{bmatrix}  \in \mathbb{R}^{n\times T_{r}}$, $\bar{Z}^{i}=\begin{bmatrix}
\bar{z}^{i}_{0}&\cdots&\bar{z}^{i}_{T_{r}-1}
\end{bmatrix} \in \mathbb{R}^{(n+p)\times T_{r}}$, $\bar{W}^{i}=\begin{bmatrix}
\bar{w}^{i}_{0}&\cdots&\bar{w}^{i}_{T_{r}-1}
\end{bmatrix}  \in \mathbb{R}^{n\times T_{r}}$.
Further, define the batch matrices $\bar{X}=\begin{bmatrix}\bar{X}^{1}&\cdots&\bar{X}^{N_{r}}\end{bmatrix}\in \mathbb{R}^{n\times N_{r}T_{r}},\bar{Z}=\begin{bmatrix}\bar{Z}^{1}&\cdots&\bar{Z}^{N_{r}}\end{bmatrix}\in \mathbb{R}^{(n+p)\times N_{r}T_{r}},\bar{W}=\begin{bmatrix}\bar{W}^{1}&\cdots&\bar{W}^{N_{r}}\end{bmatrix}\in \mathbb{R}^{n\times N_{r}T_{r}}$. Denoting $\Theta \triangleq \begin{bmatrix}
\bar{A}&\bar{B}\end{bmatrix}\in \mathbb{R}^{n\times(n+p)}$, we have
\begin{equation*}
\begin{aligned}
&\bar{X}=\Theta\bar{Z}+\bar{W}.
\end{aligned}
\end{equation*}
Typically, one would like to solve the following optimization problem:
\begin{equation*} 
\begin{aligned}
    \mathop{\min}_{\tilde{\Theta}\in \mathbb{R}^{n\times (n+p)}} \|\bar{X}-\tilde{\Theta}\bar{Z}\|^{2}_{F},
\end{aligned}
\end{equation*}
and obtain an estimate $\Theta_{LS} \triangleq \begin{bmatrix}\bar{A}_{LS}& \bar{B}_{LS}\end{bmatrix}$, of which the analytical form is
\begin{equation*} 
\begin{aligned}
\Theta_{LS}=\bar{X}\bar{Z}^{'}(\bar{Z}\bar{Z}^{'})^{-1},
\end{aligned}
\end{equation*}
under the assumption that the matrix $\bar{Z}\bar{Z}^{'}$ is invertible. The quality of the recovered estimate will depend on $N_{r}$ and $T_{r}$; in particular, if both $N_{r}$ and $T_{r}$ are small, the obtained estimate could have large estimation error \cite{dean2019sample, simchowitz2018learning}.

Suppose that, in addition to samples from the true system, we also have access to samples generated from an auxiliary system that shares ``similar'' (but unknown) dynamics to system \eqref{eq:True system}. In particular, consider an auxiliary discrete time linear time-invariant system 
\begin{equation}
\begin{aligned} 
\hat{x}_{t+1}=\hat{A}\hat{x}_{t}+\hat{B}\hat{u}_{t}+\hat{w}_{t},\\
\end{aligned}
\label{eq:Perturbed system}
\end{equation}
where $\hat{x}_{t}\in \mathbb{R}^{n}$, $\hat{u}_{t}\in \mathbb{R}^{p}$, $\hat{w}_{t}\in \mathbb{R}^{n}$ are the state, input, and process noise, respectively, and $\hat{A}\in \mathbb{R}^{n \times n}$ and $\hat{B}\in \mathbb{R}^{n \times p}$ are system matrices. The above dynamics can be rewritten as
\begin{equation} 
\begin{aligned}
\hat{x}_{t+1}=(\bar{A}+\delta_{A})\hat{x}_{t}+(\bar{B}+\delta_{B})\hat{u}_{t}+\hat{w}_{t},\\
\end{aligned}
\end{equation}
where $\delta_{A}=\hat{A}-\bar{A},\delta_{B}=\hat{B}-\bar{B}$. Intuitively, the samples generated from the above system will be useful for identifying system \eqref{eq:True system} if both $\|\delta_{A}\|$ and $\|\delta_{B}\|$ are small. For example, if we want to identify the dynamics of a vehicle, the auxiliary system could be another vehicle of the same type produced by the same manufacturer. We also provide a scenario involving a change in dynamics of a system where the true and the auxiliary systems have the same state representation in our experiment section later in the paper.\footnote{In practice, the auxiliary system needs to share the same set of state variables to be considered ``similar'', although our results hold for general systems where the states are unrelated.} 

Thus, suppose that we also have access to $N_{p}$ independent experiments of system \eqref{eq:Perturbed system}, in which the system restarts from an initial state $\hat{x}_{0}$, and each experiment is of length $T_{p}$. Let $\{(\hat{x}^{i}_{t},\hat{u}^{i}_{t}):1 \leq i \leq N_{p},0 \leq t \leq T_{p}\}$ denote the samples from these experiments. Let $\hat{z}^{i}_{t}=\begin{bmatrix} \hat{x}^{i'}_{t}& \hat{u}^{i'}_{t} \end{bmatrix}^{'}\in \mathbb{R}^{n+p}$ for $t\geq 0$. The matrices  $\hat{X}^{i}\in \mathbb{R}^{n\times T_{p}},\hat{Z}^{i}\in \mathbb{R}^{(n+p)\times T_{p}},\hat{W}^{i}\in \mathbb{R}^{n\times T_{p}},\hat{X}\in \mathbb{R}^{n\times N_{p}T_{p}},\hat{Z}\in \mathbb{R}^{(n+p)\times N_{p}T_{p}},\hat{W}\in \mathbb{R}^{n\times N_{p}T_{p}}$ are defined similarly, using the signals $\hat{u}^{i}_{t},\hat{x}^{i}_{t},\hat{w}^{i}_{t}$ from system \eqref{eq:Perturbed system}.  Let $X=\begin{bmatrix}\bar{X}&\hat{X}\end{bmatrix}\in \mathbb{R}^{n\times (N_{r}T_{r}+N_{p}T_{p})}, Z=\begin{bmatrix}\bar{Z}&\hat{Z}\end{bmatrix}\in \mathbb{R}^{(n+p)\times (N_{r}T_{r}+N_{p}T_{p})}, W=\begin{bmatrix}\bar{W}&\hat{W}\end{bmatrix}\in \mathbb{R}^{n\times (N_{r}T_{r}+N_{p}T_{p})}$ and $\delta_{\Theta}=\begin{bmatrix}
\delta_{A}&\delta_{B}\end{bmatrix}\in \mathbb{R}^{n\times (n+p)}$. Defining 
\begin{equation*}
\Delta^{i}=\begin{bmatrix}
\delta_{\Theta} \hat{z}^{i}_{0}&\cdots&\delta_{\Theta}\hat{z}^{i}_{T_{p}-1}
\end{bmatrix}\in \mathbb{R}^{n\times T_{p}},\\
\end{equation*}
for all $i\in\{1,\dots,N_p\}$, and denoting
\begin{equation*}
\Delta=\begin{bmatrix}
\mathbf{0}&\cdots&\mathbf{0}&\Delta^{1} &\cdots& \Delta^{N_{p}}\end{bmatrix}\in \mathbb{R}^{n\times (N_{r}T_{r}+N_{p}T_{p})},
\end{equation*}
where we use $\mathbf{0}$ to denote zero matrices with appropriate dimensions, we have the relationship
\begin{equation} \label{relationship}
\begin{aligned}
&X=\Theta Z+W+\Delta.
\end{aligned}
\end{equation}

Letting $q \in \mathbb{R}_{\ge 0}$ be a design parameter that specifies the relative weight assigned to samples generated from the auxiliary system \eqref{eq:Perturbed system}, we can define $Q=\diag(I_{N_{r}T_{r}},qI_{N_{p}T_{p}})\in \mathbb{R}^{(N_{r}T_{r}+N_{p}T_{p})\times (N_{r}T_{r}+N_{p}T_{p})}$. Setting the regularization parameter $\lambda\geq 0$, we are interested in the following (regularized-) weighted least squares problem:
\begin{equation} \label{Regularized weighted pb}
\begin{aligned}
    \mathop{\min}_{\tilde{\Theta}\in \mathbb{R}^{n\times (n+p)}} \{\|XQ^{\frac{1}{2}}-\tilde{\Theta}ZQ^{\frac{1}{2}}\|^{2}_{F}+\lambda \|\tilde{\Theta}\|^2_{F}\}.
\end{aligned}
\end{equation}
The well known (regularized-) weighted least squares estimate \cite{hoerl1970ridge} is $\Theta_{WLS} \triangleq\begin{bmatrix}\bar{A}_{WLS}&\bar{B}_{WLS}\end{bmatrix}$, which has the form
\begin{equation} 
\begin{aligned}
\Theta_{WLS}=XQZ'(ZQZ'+\lambda I_{n+p})^{-1},
\end{aligned}
\end{equation}
when the matrix $ZQZ'+\lambda I_{n+p}$ is invertible.
Using \eqref{relationship}, the system identification error can be expressed as
\begin{equation} 
\begin{aligned}
\|\Theta_{WLS}-\Theta\|&=\|-\lambda\Theta(ZQZ'+\lambda I_{n+p})^{-1}\\
&+WQZ'(ZQZ'+\lambda I_{n+p})^{-1}\\
&+\Delta QZ'(ZQZ'+\lambda I_{n+p})^{-1}\| .\label{error_W}
\end{aligned}
\end{equation}
In particular, when the regularization parameter is set to be $\lambda=0$, we recover the standard weighted least squares estimate.

\begin{remark}
The weight parameter $q$ specifies how much we weight the data from the auxiliary system relative to the data from the true system, and can depend on the number of samples ($N_{r},T_r$ and $N_{p}, T_p$) from each of those systems or the data available to us. The specific choice of $q$ will be discussed in detail later as we present our main results. 
\end{remark}


Our results will leverage the following definition of sub-Gaussian random vectors.
\begin{definition}
A real-valued random variable $w$ is called sub-Gaussian with parameter $R^2$ if  we have
\begin{equation*}
\begin{aligned}
&\forall \alpha\in \mathbb{R}, \mathbb{E}[\exp(\alpha{w})]\leq \exp(\frac{\alpha^2 R^2}{2}).\\
\end{aligned}
\end{equation*}
A random vector $x\in \mathbb{R}^n$ is called $R^2$ sub-Gaussian if for all unit vectors $v\in \mathcal{S}^{n-1}$ the random variable $v'x$ is $R^2$ sub-Gaussian.
\end{definition}

We make the following assumption.

\begin{assumption} \label{assumption}
The random vectors $\bar{w}_{t},\bar{u}_{t},\bar{x}_{0}, \hat{w}_{t}, \hat{u}_{t},\hat{x}_{0}$ are independent sub-Gaussian with independent coordinates for all $t\geq 0$. Furthermore, they have positive definite covariance matrices and sub-Gaussian parameters $\sigma_{\bar{w}}^2, \sigma_{\bar{u}}^2,\sigma_{\bar{x}_{0}}^2, \sigma_{\hat{w}}^2, \sigma_{\hat{u}}^2,\sigma_{\hat{x}_{0}}^2$, respectively.
\end{assumption}

We note that independent random inputs are commonly used in the context of system identification to provide excitation of the system dynamics  \cite{dean2019sample,oymak2019non}. Studying the optimal input for system identification is an active area of research \cite{wagenmaker2020active}. Further, random initial states can be easily obtained from deterministic initial states that are equal to zero. A simple way to achieve this is to apply zero input and treat the first state as the initial state for each trajectory (which is random due to noise). Our results could also be generalized to include bounded deterministic initial states.

To ease the notation, we will define some useful quantities. We denote $\bar{\sigma}_{max}=\max(\sigma_{\bar{w}}, \sigma_{\bar{u}},\sigma_{\bar{x}_{0}})$. Letting $v(j)$ denote the $j$-th component of a vector $v$, we define 
\begin{equation*}
\begin{aligned}
&\bar{\sigma}^{2}_{min}\triangleq\inf(\{\mathbf{E}[\bar{w}(i)^2],\mathbf{E}[\bar{u}(j)^2],\mathbf{E}[\bar{x}_{0}(i)^2]\})>0,\\
&\bar{\sigma}_{*}\triangleq\sup\left(\Bigg\{\frac{\mathbf{E}[\bar{w}(i)^4]}{\mathbf{E}[\bar{w}(i)^2]^2},\frac{\mathbf{E}[\bar{u}(j)^4]}{\mathbf{E}[\bar{u}(j)^2]^2},\frac{\mathbf{E}[\bar{x}_{0}(i)^4]}{\mathbf{E}[\bar{x}_{0}(i)^2]^2}\Bigg\}\right),\\
\end{aligned}
\end{equation*}
for all $t\geq 0, 1\leq i\leq n, 1\leq j \leq p$, where we omitted the time index $t$ for the ease of exposition. Further, define the following matrix for $t\geq 0$:



\begin{equation} \label{def: Gt}
\begin{aligned}
\bar{G}_{t}\triangleq \sum_{i=0}^{t}\bar{A}^{i}\bar{A}^{i}{'}+\sum_{i=0}^{t-1}\bar{A}^{i}\bar{B}\bar{B}'\bar{A}^{i}{'}.\\
\end{aligned}
\end{equation}
The terms $\hat{\sigma}_{max}, \hat{\sigma}^{2}_{min}, \hat{\sigma}_{*}$ and $\hat{G}_{t}$ are defined similarly for the auxiliary system \eqref{eq:Perturbed system}.

In the next section, we provide data-independent bounds (assuming $\lambda=0$) and a data-dependent bound (assuming $\lambda> 0$) of the system identification error in \eqref{error_W}. We study the case when $\lambda=0$ in the data-independent bounds to highlight our key insights (the benefits of the auxiliary data and the role of the weight parameter $q$), and the results for $\lambda>0$ can be easily generalized. The data-independent finite sample upper bounds characterize the error as a function of $N_{r}, T_{r}, N_{p}, T_{p}, q, \|\delta_{\Theta}\|$ and other parameters from the true system and the auxiliary system. While the data-independent error bounds provide insights on the benefits of using the auxiliary samples, the derived results may not be used directly in practice, since they involve unknown system parameters. To address that, we also provide a data-dependent bound for the case when $\lambda>0$. The non-zero regularization parameter $\lambda$ not only helps us to derive the data-dependent result, but also provides the user with more flexibility to tune the estimate in practice. One could set $\lambda$ to be small to reduce the impact of regularization on the estimate. The derived data-dependent bound is computable, applicable to more general input and noise, and can be used in real-world applications to select the weight parameter $q$ (and regularization parameter $\lambda$). More specifically, the bound characterizes the error as a function of $\sigma_{\bar{w}},\sigma_{\hat{w}}, q, \|\delta_{\Theta}\|$, $\lambda$, $\|\Theta\|$, and the available data. Both our data-independent bounds and data-dependent bound will provide insights and guidance on selecting an appropriate weight parameter $q$. We will assume that system \eqref{eq:True system} and system \eqref{eq:Perturbed system} have the same stability in our discussions, i.e., both $\rho(\bar{A})$ and $\rho(\hat{A})$ are less than 1, or both $\rho(\bar{A})$ and $\rho(\hat{A})$ are equal to 1, or both $\rho(\bar{A})$ and $\rho(\hat{A})$ are greater than 1 (although $\rho(\bar{A})$ does not need to equal to $\rho(\hat{A})$), but similar insights can be extended even if they are different.


\section{Finite Sample Guarantees of the System Identification Error}
\label{Analysis}
In this section, we provide data-independent bounds (assuming $\lambda=0$), and a data-dependent bound (assuming $\lambda> 0$) on the system identification error in \eqref{error_W}. The proof of the data-independent bounds follow by upper bounding the error terms $\|WQZ'(ZQZ')^{-\frac{1}{2}}\|$, $\|(ZQZ')^{-\frac{1}{2}}\|$, and $\|\Delta QZ'\|$ separately. The proof of the data-dependent bound follows by directly evaluating an upper bound of the term \eqref{error_W} from data, but with the replacement of the noise-dependent term  $\|WQZ'(ZQZ'+\lambda I_{n+p})^{-\frac{1}{2}}\|$ by a high-probability upper bound. All of the proofs are presented in the appendix.

\subsection{Data-independent Bounds}
Here, we present our first main result, a data-independent finite sample upper bound  on the weighted least squares estimation error in \eqref{error_W} when $\lambda=0$. In the following result, we let $c,c_{1}$ denote some positive constants.\footnote{See Remark \ref{c} and Remark \ref{c1} in the appendix for more discussions on the constants $c,c_{1}$. }
\begin{theorem}
\label{data-independent bound}
Fix $q\geq 0$, $\delta \in (0,\frac{2}{e})$, and let Assumption \ref{assumption} hold. Denote $\bar{\zeta}=\frac{\bar{\sigma}_{min}}{c_{1}\bar{\sigma}_{*}}$ and $\hat{\zeta}=\frac{\hat{\sigma}_{min}}{c_{1}\hat{\sigma}_{*}}$. Suppose that $N_{r}T_{r}\geq \max\{33,8c_{1}^2\bar{\sigma}_{*}^2(\log\frac{2}{\delta}+(n+p)\log\frac{144\bar{g}(\frac{\delta}{2})}{\bar{\zeta}^2(N_{r}T_{r}-1)})+1\}$, $N_{p}T_{p}\geq \max\{33,8c_{1}^2\hat{\sigma}_{*}^2(\log\frac{2}{\delta}+(n+p)\log\frac{144\hat{g}(\frac{\delta}{2})}{\hat{\zeta}^2(N_{p}T_{p}-1)})+1\}$, $\bar{g}(\frac{\delta}{2}) \geq \frac{\bar{\zeta}^2(N_{r}T_{r}-1)}{16}$, and $\hat{g}(\frac{\delta}{2}) \geq \frac{\hat{\zeta}^2(N_{p}T_{p}-1)}{16}$. Then with probability at least $1-5\delta$, the weighted least squares estimate $\Theta_{WLS}$ using $\lambda=0$ satisfies 
\begin{equation} \label{eqn:upper bound on error}
\begin{aligned}
&\|\Theta_{WLS}-\Theta\|\\
& \leq \underbrace{\frac{20\max(\sigma_{\bar{w}}, \sqrt{q}\sigma_{\hat{w}})\sqrt{\log\frac{9^{n}}{\delta}+(n+p)\log(\phi)}}{\sqrt{N_{r}T_{r}\bar{\zeta}^2+qN_{p}T_{p}\hat{\zeta}^2}}}_\text{Error due to noise}\\
&+\underbrace{q\|\delta_{\Theta}\|\frac{33\hat{g}(\delta)}{N_{r}T_{r}\bar{\zeta}^2+qN_{p}T_{p}\hat{\zeta}^2}}_\text{Error due to difference between true and auxiliary systems},
\end{aligned}
\end{equation}
where 
\begin{equation*}
\begin{aligned}
\phi=\phi(N_{r}, T_{r},N_{p},T_{p},q)=\frac{33(\bar{g}(\delta)+q\hat{g}(\delta))}{N_{r}T_{r}\bar{\zeta}^2+qN_{p}T_{p}\hat{\zeta}^2}+1,
\end{aligned}
\end{equation*}
\begin{equation*}
\begin{aligned}
\bar{g}(\delta)=N_{r}\sum_{t=0}^{T_{r}-1}(\tr(\bar{G}_{t})+p)(\frac{1}{c}\log(\frac{2}{\delta})+1)\bar{\sigma}_{max}^2,
\end{aligned}
\end{equation*}
\begin{equation*}
\begin{aligned}
\hat{g}(\delta)=N_{p}\sum_{t=0}^{T_{p}-1}(\tr(\hat{G}_{t})+p)(\frac{1}{c}\log(\frac{2}{\delta})+1)\hat{\sigma}_{max}^2.
\end{aligned}
\end{equation*}
\end{theorem}

\begin{remark} \textbf{Interpretation of Theorem \ref{data-independent bound}}. Recall that $N_{r}$ is the number of rollouts from the true system \eqref{eq:True system}, $T_{r}$ is the length of each rollout of the true system \eqref{eq:True system}, $N_{p}$ is the number of rollouts from the auxiliary system \eqref{eq:Perturbed system}, and $T_{p}$ is the length of each rollout of the auxiliary system \eqref{eq:Perturbed system}. Consequently, the quantities $N_{r}T_{r}$ and $N_{p}T_{p}$ capture the total number of samples from the true system and the auxiliary system, respectively. Further, recall that $\sigma_{\bar{w}}, \sigma_{\hat{w}}$ capture the noise levels from the two systems, and $\|\delta_{\Theta}\|$ captures the difference between the two system models. For strictly stable systems \eqref{eq:True system}-\eqref{eq:Perturbed system}, $\bar{g}(\delta)$ and $\hat{g}(\delta)$ grow at most linearly with respect to $T_{r}$ and $T_{p}$. For marginally stable systems, $\bar{g}(\delta)$ and $\hat{g}(\delta)$ grow at most polynomially with respect to $T_{r}, T_{p}$ (see Proposition \ref{prop:system matrices bound} in the appendix). The terms $\bar{g}(\delta)$ and $\hat{g}(\delta)$ grow at most linearly with respect to $N_r,N_{p}$, irrespective of the spectral radius of the systems. Consequently, the parameter $\phi$ remains bounded with respect to $T_{r},T_{p}$ for strictly stable systems, grows at most polynomially with respect to $T_{r}, T_{p}$ for marginally stable systems, and remains bounded with respect to $N_{r},N_{p}$, irrespective of the spectral radius of the systems.  We discuss some further observations below.


\textit{Error due to noise and error due to model difference:} Theorem \ref{data-independent bound} decomposes the overall estimation error into the error due to noise (or the intrinsic error) and the error due to model difference. Suppose that $q=1$, and both systems are strictly stable for now. The error due to noise depends on the noise levels from the true system and the auxiliary system, and can be reduced by increasing the number of samples from the true system and the auxiliary system (increase $N_{r}T_{r}$ or $N_{p}T_{p}$). Theorem~\ref{data-independent bound} is an improvement over the result in \cite{xin2022identifying}, since Theorem~\ref{data-independent bound} shows that one can reduce the error due to noise by increasing either the number of rollouts or the length of these rollouts (or both), whereas the result in \cite{xin2022identifying} only shows the error due to noise can be reduced by increasing the number of rollouts. The dependence on $\sqrt{n+p}$ is due to the dimension of the system model we wish to learn. The error due to model difference depends on how similar the two systems are, and becomes smaller if the auxiliary system is more similar to the true system (smaller $\|\delta_{\Theta}\|$), or if there are more samples from the true system than auxiliary system (increase $N_{r}T_{r}$). Consequently, one can observe that increasing the number of samples from the auxiliary system helps to reduce the error due to noise, at the price of adding a portion of error due to model difference (note that the error due to model difference is always bounded when we increase $N_{p}$ or $T_{p}$). In particular, when the two systems are exactly the same, i.e., $ \|\delta_{\Theta}\|=0$, Theorem \ref{data-independent bound} recovers the learning rate $\mathcal{O}(\frac{1}{\sqrt{N_{r}T_{r}+N_{p}T_{p}}})$, which qualitatively matches the learning rate as reported in \cite{tu2022learning}, when all samples are generated from the same system.

When the two systems are both marginally stable, one can see that the error due to noise can still go to zero as $T_{r},T_{p}$ increase, since the term $\phi$ grows at most polynomially with respect to $T_{r}, T_{p}$. However, the error due to model difference may amplify as $T_{p}$ increases. We provide a slightly refined bound for large $N_{p}T_{p}$ in Proposition \ref{data-independent bound refined} to capture this case.

\textit{The benefits of collecting multiple trajectories:} The existing literature has shown that the multiple trajectories setup has the benefit of handling unstable systems (when all samples are collected from the true system), since restarting the system from an initial state prevents the system state from going to infinity over time \cite{dean2019sample}. This benefit is captured by our result. In particular, fixing $T_{r},T_{p}, q$, one can observe that the error due to noise always goes to zero as we increase $N_{r}$ or $N_{p}$, irrespective of the spectral radius of the two systems, since the parameter $\phi$ is bounded. Further, the error due to model difference always goes to zero as we increase $N_{r}$, 
and remains constant as we increase $N_{p}$, again irrespective of the spectral radius of the two systems.

\textit{The selection of weight parameter q:} In practice, selecting a good weight parameter $q$ based on Theorem \ref{data-independent bound} requires an oracle, since one has to know the specific values of the different parameters in Theorem \ref{data-independent bound}. Further, due to the different realizations of random variables, the optimal weight might differ at each experiment. A commonly used approach for tuning parameters in the training process is to leverage a cross-validation process (see \cite{refaeilzadeh2009cross} for an overview). In section \ref{sec:data-dependent}, we also provide a data-dependent bound, which is computable and can help one to select a good value of $q$ based on data. However, general guidelines can be given based on the upper bound provided by Theorem \ref{data-independent bound} when $N_{p}$ or $T_{p}$ is large and $\|\delta_{\Theta}\|$ is small, supposing that the two systems are strictly stable (for simplicity):
\begin{itemize}
    \item When $N_{r}T_{r}$ is small, we can set $q$ to be relatively large to make sure that the first term in the error bound is small (use the auxiliary data to reduce the error due to noise). Consequently, the error bound is essentially dominated by the second term, which is small if the two systems are similar. This corresponds to the case where we have little data from the true system, and thus there may be a large identification error due to using only that data. 
    \item When $N_{r}T_{r}$ is large, we can  decrease $q$ to reduce the second term as well, since the first term is already made small enough. This corresponds to the case where we have a large amount of data from the true system, and only need the data from the auxiliary system to slightly improve our estimates. In this case, we place a lower weight on the auxiliary data in order to avoid excessive bias due to the difference in the dynamics of the two systems.
    \end{itemize}
Furthermore, Theorem \ref{data-independent bound} demonstrates how the weight parameter should scale. For example, it can be verified that one can set $q=\mathcal{O}(\frac{1}{\sqrt{N_{r}}})$ to ensure consistency, when $N_{p}$ grows linearly with respect to $N_{r}$ (when $T_{r}, T_{p}$ are fixed). These ideas will also be illustrated experimentally in Section \ref{exp}.
\end{remark}

Finally, the following corollary of Theorem \ref{data-independent bound} provides a sufficient condition under which using the data from the auxiliary system (setting $q\neq 0$) leads to a smaller error bound compared to using data only from the true system (setting $q=0$), when both the true system and the auxiliary system are strictly stable.

\begin{coro}\label{When to use}
Suppose that both system \eqref{eq:True system} and system \eqref{eq:Perturbed system} are strictly stable, i.e., $\rho(\bar{A})<1$ and $\rho(\hat{A})<1$.  Consider the estimation error bound provided in Theorem \ref{data-independent bound}. Suppose that $q$ satisfies the following inequality: 
\begin{equation}
\begin{aligned}
& \frac{\sigma_{\bar{w}}\sqrt{\log\frac{9^{n}}{\delta}+(n+p)\log(\frac{33\bar{g}(\delta)}{N_{r}T_{r}\bar{\zeta}^2}+1)}}{\sqrt{N_{r}T_{r}\bar{\zeta}^2}}\\
& > \frac{\max(\frac{\sigma_{\bar{w}}}{\sqrt{q}}, \sigma_{\hat{w}})\sqrt{\log\frac{9^{n}}{\delta}+(n+p)\log(\frac{\gamma}{\zeta^2}+1)}}{\sqrt{N_{p}T_{p}\hat{\zeta}^2}}\\
&+\|\delta_{\Theta}\|\frac{\gamma}{20\hat{\zeta}^2},
\label{eq:condition_to_use_aux_data}
\end{aligned}
\end{equation}
where $\zeta=\min(\bar{\zeta}, \hat{\zeta})$, and $\gamma$ is any positive constant that satisfies
\begin{equation*} 
\begin{aligned}
&\max(33(\tr(\bar{G}_{t})+p)(\frac{1}{c}\log(\frac{2}{\delta})+1)\bar{\sigma}_{max}^2,\\
&\quad\quad\quad\quad\quad\quad33(\tr(\hat{G}_{t})+p)(\frac{1}{c}\log(\frac{2}{\delta})+1)\hat{\sigma}_{max}^2)\leq \gamma
\end{aligned}
\end{equation*}
for all $t\geq 0$. Then the resulting error bound will be smaller than the error bound obtained using $q=0$. 
\end{coro}

\begin{remark} \textbf{Interpretation of Corollary \ref{When to use}}.
Note that $\gamma$ always exists since $\tr(\bar{G}_{t})$ and $\tr(\hat{G}_{t})$ are bounded for strictly stable systems (see Proposition \ref{prop:system matrices bound} in the appendix). We also note that the above condition might be conservative, and may not be easily checked in practice since it involves unknown parameters. However, we describe the insights provided by this condition here. One may observe that condition \eqref{eq:condition_to_use_aux_data} is more likely to hold if $\|\delta_{\Theta}\|$ is small (the true system and the auxiliary system shares ``similar" dynamics), and $N_{p}T_{p}$ is large (one has abundant samples from the auxiliary system), as these conditions can make the right hand side of the inequality smaller. In other words, the auxiliary samples tend to be more informative in such cases. In contrast, condition \eqref{eq:condition_to_use_aux_data} is less likely to hold if $N_{r}T_{r}$ is large, since it will make the left hand side of the inequality smaller, i.e., if we already have a lot of samples from the true system, then the auxiliary samples tend to be less informative. The effect of the noise can be quite subtle, since it shows up in various places. However, loosely speaking, having a smaller $\sigma_{\hat{w}}$ while assigning higher weight $q$ may still help to make the right hand side of the inequality smaller by making the term $\max(\frac{\sigma_{\bar{w}}}{\sqrt{q}},\sigma_{\hat{w}})$ smaller, when the terms $\hat{\zeta}$ and $\bar{\zeta}$ are not affected too much. In other words, we might be able to benefit from the auxiliary system if the auxiliary system is not too noisy, and if we attach enough importance to the auxiliary samples.  
\end{remark}

The following result is a slightly refined bound for large $N_{p}T_{p}$.
\begin{proposition}
\label{data-independent bound refined}
Under the same conditions in Theorem \ref{data-independent bound}, with probability at least $1-5\delta$, the weighted least squares estimate $\Theta_{WLS}$ using $\lambda=0$ satisfies 
\begin{equation} 
\begin{aligned}
&\|\Theta_{WLS}-\Theta\|\\
& \leq \underbrace{\frac{20\max(\sigma_{\bar{w}}, \sqrt{q}\sigma_{\hat{w}})\sqrt{\log\frac{9^{n}}{\delta}+(n+p)\log(\phi)}}{\sqrt{N_{r}T_{r}\bar{\zeta}^2+qN_{p}T_{p}\hat{\zeta}^2}}}_\text{Error due to noise}\\
&+\underbrace{\|\delta_{\Theta}\|(1+\frac{33\bar{g}(\delta)}{N_{r}T_{r}\bar{\zeta}^2+qN_{p}T_{p}\hat{\zeta}^2})}_\text{Error due to difference between true and auxiliary systems}.
\end{aligned}
\end{equation}
\end{proposition}
\begin{remark}
Proposition \ref{data-independent bound refined} yields a refined bound when $N_{p}$ or $T_{p}$ goes to infinity (compared to Theorem \ref{data-independent bound}). Fixing $q>0$, it can be observed that the error bound converges exactly to $\|\delta_{\Theta}\|$ as $T_{p}$ increases, when the auxiliary system is marginally stable or strictly stable ($\rho(\hat{A})\leq 1$), since the $\phi$ term grows at most polynomially with respect to $T_{p}$, and the error due to model difference converges to $\|\delta_{\Theta}\|$. Further, the bound converges exactly to $\|\delta_{\Theta}\|$ as $N_{p}$ increases, irrespective of the spectral radius of the system. In other words, one is essentially learning the dynamics of the auxiliary system when we use a lot of auxiliary samples. 
\end{remark}
\subsection{Data-dependent Bound} \label{sec:data-dependent}
In this section, we provide a data-dependent upper bound of the system identification error, assuming $\lambda>0$. The regularized solution with strictly positive $\lambda$ helps us to establish the data-dependent bound, and provides the user with more flexibility to tune the estimate in practice. The bound is computable when some prior knowledge about the systems (as will be discussed below) is available, and applies to more general input and noise. One can also use it for the selection of weight parameter $q$ and regularization parameter $\lambda$ in practice (by selecting the weight parameter $q$ and regularization parameter $\lambda$ that give a smaller error bound).  
\begin{theorem}
\label{data-dependent bound}
Consider the systems \eqref{eq:True system}-\eqref{eq:Perturbed system},
where the random vectors $\bar{w}_{t},\bar{u}_{t},\bar{x}_{0}, \hat{w}_{t}$, $\hat{u}_{t},\hat{x}_{0}$ are independent, and $\bar{w}_{t},\hat{w}_{t}$ are sub-Gaussian with parameters $\sigma_{\bar{w}}^2$ and $\sigma_{\hat{w}}^2$, respectively, for all $t\geq 0$. Fix $q\geq 0$, $\lambda>0$, and $\delta >0$. Let $V=\lambda I_{n+p}$ and $\bar{V}=(ZQZ'+V)V^{-1}$. With probability at least $1-\delta$, the regularized weighted least squares estimate $\Theta_{WLS}$ satisfies 
\begin{equation}  \label{eq:data-dependent}
\begin{aligned}
&\|\Theta_{WLS}-\Theta\|\leq\\
& \frac{\max(\sigma_{\bar{w}}, \sqrt{q}\sigma_{\hat{w}})\sqrt{\frac{32}{9}(\log\frac{9^{n}}{\delta}+\frac{1}{2}\log\det(\bar{V}))}}{\sqrt{\lambda_{min}(ZQZ'+\lambda I_{n+p})}}\\
&+q\|\delta_{\Theta}\|\|\hat{Z}\hat{Z}'(ZQZ'+\lambda I_{n+p})^{-1}\|\\
&+\|\Theta\|\frac{\lambda}{\lambda_{min}(ZQZ'+\lambda I_{n+p})}.
\end{aligned}
\end{equation}
\end{theorem}


\begin{remark} \label{model difference} Practically, one can compute the error bound in Theorem~\ref{data-dependent bound} using various different $q$ and $\lambda$, and choose a value of $q$ and $\lambda$ that give the smallest error bound. We will illustrate the selection of $q$ in Section \ref{exp}. Note that the model difference term $\|\delta_{\Theta}\|$ in \eqref{eq:data-dependent} can be replaced by an upper bound on the difference between the models (if that is available). In practice, an appropriate upper bound of the term $\|\delta_{\Theta}\|$ could be obtained using prior knowledge or previous estimates from data. For example, if one knows that an auxiliary system is different from the true system only in certain subsystems, where the entries are restricted to a fixed range, that information can be used to compute an upper bound of $\|\delta_{\Theta}\|$. Also, the difference between the subsystems of the true system and the auxiliary can be estimated from data using any existing techniques (e.g., least squares method). This may be helpful if doing experiments on the subsystems is easy. The bound on $\|\Theta\|$ can be obtained similarly using prior knowledge (e.g., using known range on entries). The noise distribution of the two systems and their corresponding sub-Gaussian parameters can also be estimated from data \cite{silverman2018density}.
\end{remark}



\section{Numerical Experiments to Illustrate Various Scenarios for System Identification from Auxiliary Data}
\label{exp}
In order to validate our main results in Theorem \ref{data-independent bound} and Theorem \ref{data-dependent bound} and  gain more insights, we now provide numerical examples of the weighted least squares-based system identification algorithm. All of the numerical results are averaged over $10$ independent experiments. 

\subsection{Predetermined $q$} \label{expA}
In this section, we provide numerical experiments using various predetermined weight parameters $q$. Such a situation may occur if we have a firm belief that the auxiliary system has similar dynamics to the true system, but upper bounds on $\|\delta_{\Theta}\|,\sigma_{\bar{w}}$ and $\sigma_{\hat{w}}$ are not available. Setting $\lambda=0$, the experiments are performed using the following true system and auxiliary system:
\begin{equation}
\begin{aligned}
\bar{A}=
\begin{bmatrix}
0.6&0.5&0.4\\
0&0.5&0.4\\
0&0&0.4\\
\end{bmatrix},
\indent 
\bar{B}=
\begin{bmatrix}
1&0.5\\
0.5&1\\
0.5&0.5\\
\end{bmatrix},
\end{aligned}
\end{equation}
\begin{equation}
\begin{aligned}
\hat{A}=
\begin{bmatrix}
0.7&0.5&0.4\\
0&0.5&0.4\\
0&0&0.4\\
\end{bmatrix},
\indent 
\hat{B}=
\begin{bmatrix}
1.1&0.5\\
0.5&1\\
0.5&0.5\\
\end{bmatrix}.
\end{aligned}
\end{equation}
We set $\bar{x}_{0}, \hat{x}_{0}, \bar{u}_{t}, \hat{u}_{t}, \bar{w}_{t}, \hat{w}_{t}$ to be zero mean Gaussian random vectors with covariance matrices being identity matrices. The model difference of the above two systems is $\|\delta_{\Theta}\| \approx 0.1414$. The numbers of rollouts $N_{r}$ and $N_{p}$ are set to be $1$. We provide experiments to illustrate various scenarios, including those we mentioned earlier in Section \ref{sec: introduction}. Note that the experiments in this section are conducted by varying the lengths of the trajectories from the two systems, while the experiments in \cite{xin2022identifying} are performed by varying the number of trajectories from the two systems. 

\begin{figure*}
\minipage[t]{0.32\textwidth} 
    \includegraphics[width=\linewidth]{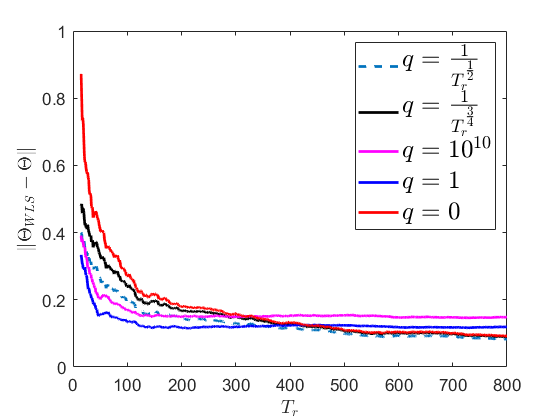}
    \caption{Scenario 1: Both $T_r$ and $T_p$ increase over time ($T_p = 3T_r$)}
    \label{fig:sce1} 
\endminipage \hfill
\minipage[t]{0.32\textwidth}
    \includegraphics[width=\linewidth]{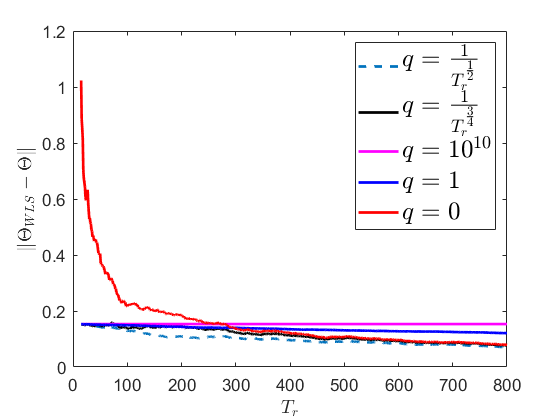}
    \caption{Scenario 2: $T_p$ is fixed, and $T_r$ increases over time}
    \label{fig:sce2} 
\endminipage \hfill
\minipage[t]{0.32\textwidth}
    \includegraphics[width=\linewidth]{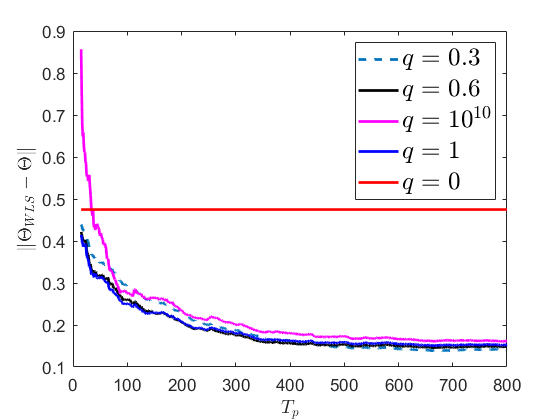}
    \caption{Scenario 3: $T_r$ is fixed, and $T_p$ increases over time}
    \label{fig:sce3} 
\endminipage
\end{figure*}

\subsubsection{Scenario 1: Both $T_{r}$ and $T_{p}$ are increasing}

In the first experiment, we set the length of the trajectory from the auxiliary system be $T_{p}=3T_{r}$. In practice, one may encounter such a scenario when gathering data from the true system is time consuming or costly, whereas gathering data from an auxiliary system (such as a simulator) is faster or cheaper.  

In Fig.~\ref{fig:sce1}, we plot the estimation error $\|\Theta-
\Theta_{WLS}\|$ versus $T_{r}$ using different weight parameters $q$. As expected, when one does not have enough data from the true system ($T_{r}$ is small), setting $q>0$ leads to a smaller estimation error of system matrices. However, the curve for $q=1$ and $q=10^{10}$ (corresponding to treating all samples equally and paying almost no attention to the samples from the true system, respectively) eventually plateau and incur more error than not using the the auxiliary data ($q=0$). This phenomenon matches with the theoretical guarantee in Theorem \ref{data-independent bound}. Specifically, when $q$ is a nonzero constant and  both $T_p$ and $T_r$ are increasing in a linear relationship, it can be verified that the upper bound in Theorem \ref{data-independent bound} will not go to zero as $T_{r}$ increases. 
In contrast, setting $q$ to be diminishing with $T_{r}$ could perform consistently better than $q=0$ in this example, even when $T_{r}$ becomes large. Indeed, one can choose $q=\mathcal{O}(\frac{1}{\sqrt{T_{r}}})$ in the upper bound given by \eqref{eqn:upper bound on error} in Theorem \ref{data-independent bound}, and show that the upper bound becomes $\mathcal{O}(\frac{1}{\sqrt{T_{r}}})$. 

\subsubsection{Scenario 2: $T_{p}$ is fixed but $T_{r}$ is increasing}
For the second experiment, we fix the number of samples from the auxiliary system to be $T_{p}=2400$, and look at what happens as the number of samples from the true system increases. In practice, one may encounter such a scenario when the system dynamics change at some point in time (e.g., due to faults).  In this case, the true system we want to learn is the one after the fault, and the auxiliary system is the one prior to the fault. Consequently, while the data from the old (auxiliary) system may not accurately represent the new (true) system dynamics, leveraging the old data might be beneficial in this case.

In Fig.~\ref{fig:sce2}, we plot the estimation error versus $T_{r}$ for different weight parameters $q$. As expected, setting $q>0$ leads to a much smaller error during the initial phase when $T_{r}$ is small. This can be confirmed by Theorem \ref{data-independent bound} since the overall estimation error is essentially the error due to the model difference.  Namely, the auxiliary data helps to build a good initial estimate when $T_{r}$ is small. When we set the weight to be $q=10^{10}$, we are paying little attention to the samples from the true system, i.e., we are not gaining any new information as we collect more data from the true system. Consequently, the error is almost a flat line as $T_{r}$ increases when $q=10^{10}$. As can be observed from Theorem \ref{data-independent bound}, when $T_{p}$ is fixed, we can always make the error go to $0$ as we increase $T_{r}$, using the weights we selected in this experiment. However, when $q$ is set to be too large, it could make the error even larger due to the model difference (or bias) introduced by the auxiliary system. This is captured by Theorem \ref{data-independent bound} since when $q$ is set to be too large (such that $qT_{p}$ is large compared to $T_{r}$), even when $T_{r}$ becomes larger, the second term in the error bound \eqref{eqn:upper bound on error} (capturing model difference) is still large.

\subsubsection{Scenario 3: $T_{r}$ is fixed but $T_{p}$ is increasing}
In the last experiment, we fix the number of samples from the true system to be $T_{r}=50$.  As discussed earlier, one may encounter such a scenario when one has only a limited amount of time to gather data from the true system. Consequently, leveraging information from other ``similar'' systems (e.g., from a reasonably accurate simulator) could be helpful to augment the data.  This is the most subtle case, since Theorem \ref{data-independent bound} does not ensure consistency when $T_{r}$ is fixed.

In Fig.~\ref{fig:sce3}, we plot the the estimation error versus $T_{p}$ using different weight parameters $q$.  As it can be seen, setting $q=0$ (not using the auxiliary samples) gives a flat line, which represents the error we can achieve purely based on $T_{r}=50$ samples from the true system. When $q=10^{10}$, we are paying little attention to the true system, and essentially learning the dynamics of the auxiliary system. In contrast, the results for $q=1, 0.6, 0.3$ suggest that setting a relatively balanced weight $q$ to the auxiliary data could make the error smaller than the two extreme cases ($q=0, 10^{10}$) in this example. However, in practice, one may want to leverage a cross-validation process to tune the hyper-parameter $q$, when there is not enough prior knowledge about the dynamics of the true system and the auxiliary system.

\subsection{Selecting $q$ based on Theorem \ref{data-dependent bound}} \label{expB}
In this section, we study selecting the weight parameter $q$ using Theorem \ref{data-dependent bound} using a fixed regularization parameter $\lambda$. We plot the true error $\|\Theta-\Theta_{WLS}\|$ and the theoretical data-dependent bound in Theorem \ref{data-dependent bound} as a function of weight parameter $q$ varying from $0$ to $2$, where the increment is set to be $0.01$. This corresponds to the situation where some upper bounds on $\|\delta_{\Theta}\|,\sigma_{\bar{w}}$ and $\sigma_{\hat{w}}$ are available (as discussed in Remark \ref{model difference}). We set the confidence parameter to be $\delta=0.01$, and all other parameters in Theorem \ref{data-dependent bound} are assumed to be known exactly for simplicity. The system matrices of the true system and the auxiliary system are set to be

\begin{equation}
\begin{aligned}
\bar{A}=
\begin{bmatrix}
0.6&0.5&0.4\\
0&0.5&0.4\\
0&0&0.4\\
\end{bmatrix},
\indent 
\bar{B}=
\begin{bmatrix}
1&0.5\\
0.5&1\\
0.5&0.5\\
\end{bmatrix},
\end{aligned}
\end{equation}
\begin{equation}
\begin{aligned}
\hat{A}=
\begin{bmatrix}
0.7&0.5&0.4\\
0&0.5&0.4\\
0&0&0.4\\
\end{bmatrix},
\indent 
\hat{B}=
\begin{bmatrix}
1+\mathbf{\Delta}&0.5\\
0.5&1\\
0.5&0.5\\
\end{bmatrix}.
\end{aligned}
\end{equation}
We set the regularization parameter to be $\lambda=1$. We set $\bar{x}_{0}, \hat{x}_{0}, \bar{u}_{t}, \hat{u}_{t}, \bar{w}_{t}, \hat{w}_{t}$ to be zero mean Gaussian random vectors, where the covariance matrices of $\bar{x}_{0}, \hat{x}_{0}, \bar{u}_{t}, \hat{u}_{t}$ are set to be identity matrices. The trajectory lengths are set to be $T_{r}=10, T_{p}=50$, and the number of trajectories from the auxiliary system is set to be $N_{p}=20$. The covariance matrices of $\bar{w}_{t}, \hat{w}_{t}$ are set to be $\sigma^2_{\bar{w}}I_{n+p}, \sigma^2_{\hat{w}}I_{n+p}$, and the values of $\mathbf{\Delta}, \sigma_{\bar{w}}, \sigma_{\hat{w}}, N_{r}$ are specified under the figures. 
\begin{figure*}
\minipage[t]{0.32\textwidth} 
    \includegraphics[width=\linewidth]{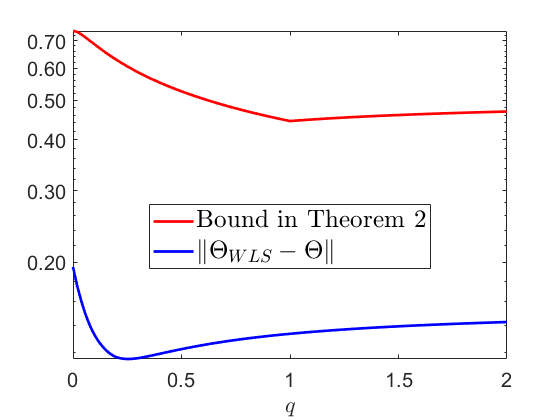}
    \caption{Baseline case 1: $\mathbf{\Delta}=0.1, \sigma_{\bar{w}}=\sigma_{\hat{w}}=1, N_{r}=20$. An intermediate value of $q$ is optimal}
    \label{fig:moderate1} 
\endminipage \hfill
\minipage[t]{0.32\textwidth} 
    \includegraphics[width=\linewidth]{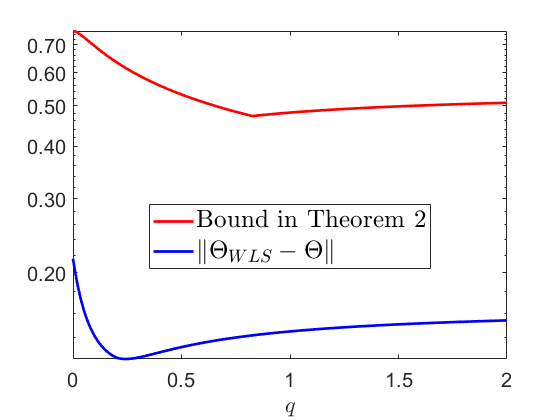}
    \caption{Baseline case 2: $\mathbf{\Delta}=0.11, \sigma_{\bar{w}}=1, \sigma_{\hat{w}}=1.1, N_{r}=19$. An intermediate value of $q$ is optimal}
    \label{fig:moderate2} 
\endminipage  \hfill
\minipage[t]{0.32\textwidth}
    \includegraphics[width=\linewidth]{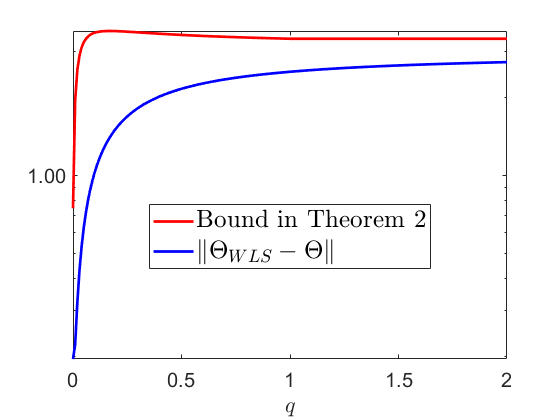}
    \caption{Large model difference: $\mathbf{\Delta}=3, \sigma_{\bar{w}}=\sigma_{\hat{w}}=1, N_{r}=20$. In this case, it is optimal to not use data from the auxiliary system ($q=0$)}
    \label{fig:largebias} 
\endminipage \hfill

\minipage[t]{0.32\textwidth}
    \includegraphics[width=\linewidth]{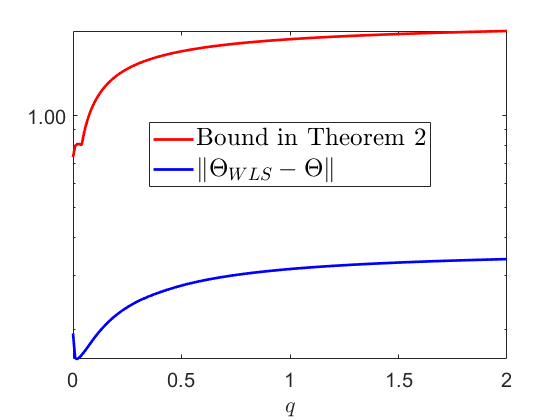}
    \caption{Noisy auxiliary system: $\mathbf{\Delta}=0.1, \sigma_{\bar{w}}=1, \sigma_{\hat{w}}=5, N_{r}=20$. In this case, it is optimal to not use data from the auxiliary system ($q=0$)}
    \label{fig:largenoise} 
\endminipage \hfill
\minipage[t]{0.32\textwidth}
    \includegraphics[width=\linewidth]{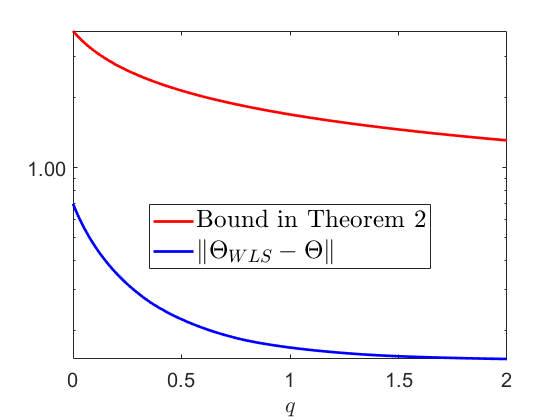}
    \caption{Noisy true system: $\mathbf{\Delta}=0.1, \sigma_{\bar{w}}=5, \sigma_{\hat{w}}=1, N_{r}=20$. In this case, it is optimal to assign higher weight to the auxiliary system ($q=2$)}
    \label{fig:noisyTrue} 
\endminipage \hfill
\minipage[t]{0.32\textwidth} 
    \includegraphics[width=\linewidth]{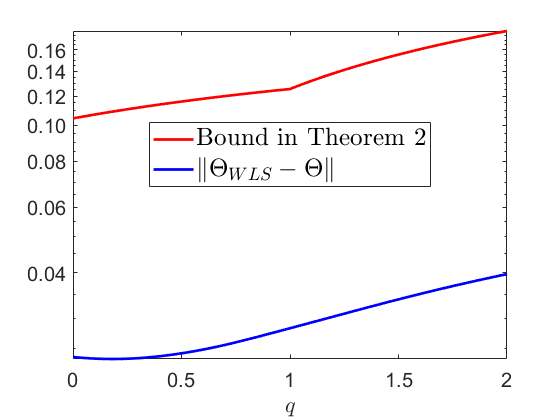}
    \caption{Large number of true samples: $\mathbf{\Delta}=0.1, \sigma_{\bar{w}}=\sigma_{\hat{w}}=1, N_{r}=1200$. In this case, it is optimal to not use data from the auxiliary system ($q=0$)}
    \label{fig:largeNr} 
\endminipage  \hfill
\end{figure*}

As can be seen in Fig.~\ref{fig:moderate1} and Fig.~\ref{fig:moderate2}, setting $q$ to be non-zero could result in smaller error bounds, which show the benefits of leveraging the auxiliary data. However, for the values of the weight $q$ we plotted, the optimal weights that obtain the smallest error bounds in Theorem \ref{data-dependent bound} do not align with the optimal weights $q$ that minimize the true error $\|\Theta-\Theta_{WLS}\|$. Such mismatches could be due to the conservativeness of the bound. On the other hand, the optimal weight from the bound still captures how the true optimal weight should scale. As can be seen in Fig.~\ref{fig:largebias}, Fig.~\ref{fig:largenoise}, and Fig.~\ref{fig:largeNr}, the optimal weight for both the bound and the true error tend to be small when (1) the model difference is large; or (2) the auxiliary system becomes much more noisy; or (3) when one has a large number of samples from the true system. Such empirical results also match with our observations in Corollary \ref{When to use} when $\lambda=0$. In Fig.~\ref{fig:noisyTrue}, both the optimal weight from our bound and the true optimal weight are greater than $1$, since the true system is very noisy and hence the data from the true system tend to be less informative compared to the data from the auxiliary system. We further note that, in practice, selecting the exact optimal weight $q$ that minimizes $\|\Theta_{WLS}-\Theta\|$ is very hard, and one would instead focus on selecting a relatively good weight $q$. The weight that results in a small error bound can be integrated with techniques like robust control to improve the overall system performance guarantee.   

\section{Conclusion}
\label{conclusion}
In this paper, we provided finite sample analysis of system identification using a weighted least squares approach, when one has an auxiliary system that shares similar dynamics as the true system we want to learn. The analysis improves the result in \cite{xin2022identifying} as we show the error due to noise can be reduced by increasing either the number of trajectories or the trajectory length of the true system and the auxiliary system, or both. Our analysis provides insights on the benefits of using the auxiliary system, and how to weight the data from the auxiliary system. We also provided a data-dependent bound that is computable when some prior knowledge about the systems is available, which is tighter and can be used to determine the appropriate weight parameter in the training process. 

There are various directions for future research. First, as shown in \cite{sarkar2019near, faradonbeh2018finite}, the least squares estimator is consistent for certain types of unstable systems even if multiple trajectories of data are not available. It would be interesting to study how to capture that in our analysis. Second, it would be of interest to relax the conditions on the full rankness of the covariance matrix of noise/input in our setup such that one could handle systems with longer memory. Another interesting direction is to develop lower bounds for such transfer learning-based system identification methods, which could potentially enable the development of an optimal estimator. Some possible approaches are to leverage assumptions like sparsity \cite{bastani2021predicting} or prior knowledge, e.g., known auxiliary system model. Finally, studying how to leverage the idea of learning from similar systems/transfer learning in control-related problems would also be a rich area for future research\cite{li2022data}.

\section*{Appendix} 
\subsection{Intermediate Results}
We will leverage the following Hanson-Wright inequality to upper bound the terms $\|\bar{Z}\bar{Z}'\|$ and $\|\hat{Z}\hat{Z}'\|$.
\begin{lemma}\cite[Theorem~1.1]{rudelson2013hanson}
\label{hanson wright}
Let $X=\begin{bmatrix} X_{1}& \ldots X_{n} \end{bmatrix}'\in \mathbb{R}^{n}$
be a random vector with independent components $X_{i}$ which satisfy $\mathbb{E}[X_{i}]=0$ and $\|X_{i}\|_{\psi_{2}}\leq K$, where $\|\cdot\|_{\psi_{2}}$ denotes the sub-Gaussian norm, i.e., $\|X_{i}\|_{\psi_{2}}=\inf\{\zeta>0: \mathbb{E}[\exp(X_{i}^2/\zeta^2)]\leq 2\}$. Let $A$ be an $n \times n$ matrix. Then, for every $t\geq0$, we have
\begin{equation*}
\begin{aligned}
&P(|X'AX-\mathbb{E}[X'AX]|>t)\\ &\leq 2\exp(-c_{0}\min(\frac{t^2}{K^4\|A\|^{2}_{F}},\frac{t}{K^2\|A\|})),\\
\end{aligned}
\end{equation*}
where $c_{0}$ is some positive universal constant.
\end{lemma}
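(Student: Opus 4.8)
The plan is to bound the moment generating function (MGF) of the centered quadratic form $S \triangleq X'AX - \mathbb{E}[X'AX]$ and then convert this into a tail bound by a Chernoff argument, optimizing the free parameter at the end. Writing $A = D + (A - D)$ with $D = \diag(A_{11},\dots,A_{nn})$, I would split $S = S_{\mathrm{diag}} + S_{\mathrm{off}}$, where $S_{\mathrm{diag}} = \sum_i A_{ii}(X_i^2 - \mathbb{E}[X_i^2])$ and $S_{\mathrm{off}} = \sum_{i\neq j} A_{ij} X_i X_j$, control each MGF separately, and combine them via Cauchy--Schwarz. The two terms of the final exponent, $\|A\|_F^2$ and $\|A\|$, should emerge naturally as the ``sub-Gaussian'' and ``sub-exponential'' scales.

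For the diagonal part, each $X_i^2 - \mathbb{E}[X_i^2]$ is a centered sub-exponential variable (since $X_i$ is sub-Gaussian with $\|X_i\|_{\psi_2}\le K$), so $S_{\mathrm{diag}}$ is a sum of independent centered sub-exponentials with $\|A_{ii}(X_i^2-\mathbb{E}[X_i^2])\|_{\psi_1}\lesssim K^2|A_{ii}|$. A standard Bernstein-type MGF estimate then gives $\mathbb{E}\exp(\lambda S_{\mathrm{diag}})\le \exp(C\lambda^2 K^4\sum_i A_{ii}^2)$ valid for $|\lambda|\lesssim 1/(K^2\max_i|A_{ii}|)$, and here $\sum_i A_{ii}^2\le \|A\|_F^2$ together with $\max_i|A_{ii}|\le\|A\|$ feed directly into the final two-regime bound.

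The main work, and the main obstacle, is the off-diagonal chaos $S_{\mathrm{off}}$. I would first apply a decoupling inequality to replace the single vector by two independent copies, obtaining $\mathbb{E}\exp(\lambda S_{\mathrm{off}})\le \mathbb{E}\exp(c\lambda\sum_{i,j}A_{ij}X_iY_j)$ for an independent copy $Y$ of $X$. Conditioning on $X$, the inner sum $\sum_j (A'X)_j Y_j$ is a linear combination of independent sub-Gaussians, so $\mathbb{E}_Y\exp(c\lambda\langle A'X,Y\rangle)\le \exp(C\lambda^2 K^2\|A'X\|_2^2)$. The difficulty is that $X$ is a general sub-Gaussian vector, \emph{not} rotation invariant, so I cannot simply diagonalize $A'A$ under its law. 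The device that resolves this is to linearize the square with an auxiliary standard Gaussian vector $g$, writing $\exp(C\lambda^2 K^2\|A'X\|_2^2)=\mathbb{E}_g\exp(c'\lambda K\langle A'X,g\rangle)$; swapping the order of expectation and using sub-Gaussianity of $X$ again yields $\mathbb{E}_X\exp(c'\lambda K\langle Ag,X\rangle)\le \exp(c''\lambda^2 K^2\|Ag\|_2^2)$. The remaining Gaussian chaos $\mathbb{E}_g\exp(c''\lambda^2 K^2\|Ag\|_2^2)$ now diagonalizes in the singular values $\sigma_i$ of $A$, equaling $\prod_i(1-2c''\lambda^2 K^2\sigma_i^2)^{-1/2}$; applying $-\tfrac12\log(1-x)\le x$ for $x$ bounded away from $1$ bounds this by $\exp(C\lambda^2 K^2\|A\|_F^2)$ \emph{provided} $\lambda^2 K^2\|A\|^2$ is small enough, and this smallness constraint is precisely what produces the spectral-norm term $\|A\|$ in the exponent.

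Putting the two MGF estimates together gives $\mathbb{E}\exp(\lambda S)\le \exp(C\lambda^2 K^4\|A\|_F^2)$ for all $|\lambda|\lesssim 1/(K^2\|A\|)$. A Chernoff bound $P(S>t)\le \exp(-\lambda t)\,\mathbb{E}\exp(\lambda S)$, optimized over $\lambda$ subject to the validity constraint, then yields the sub-Gaussian regime $\exp(-ct^2/(K^4\|A\|_F^2))$ for small $t$ and the sub-exponential regime $\exp(-ct/(K^2\|A\|))$ for large $t$; repeating for $-S$ and taking a union bound gives the two-sided statement with the stated $\min(\cdot,\cdot)$ form. I expect the decoupling step, combined with the Gaussian-linearization trick that sidesteps the non-rotation-invariance of $X$, to be the crux of the argument; once the MGF is in hand, the Chernoff optimization and constant bookkeeping are routine.
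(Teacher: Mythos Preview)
Your proposal is a correct and well-organized sketch of the Rudelson--Vershynin proof of the Hanson--Wright inequality: the diagonal/off-diagonal split, decoupling, Gaussian linearization to bypass the lack of rotation invariance, and the final Chernoff optimization are exactly the ingredients of that argument. However, the paper does not supply its own proof of this lemma at all; it simply cites \cite[Theorem~1.1]{rudelson2013hanson} and uses the result as a black box. So there is nothing in the paper to compare against beyond noting that your sketch faithfully reproduces the cited reference's method.
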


We have the following result.
\begin{lemma}
Let Assumption \ref{assumption} hold. For any fixed $\delta\in(0,\frac{2}{e})$, each of the following inequalities holds with probability at least $1-\delta$:
\begin{equation*}
\|\sum_{i=1}^{N_{r}}\sum_{t=0}^{T_{r}-1}\bar{z}^{i}_{t}\bar{z}^{i'}_{t}\|\leq \bar{g}(\delta),
\end{equation*}
\begin{equation*}
\|\sum_{i=1}^{N_{r}}\sum_{t=0}^{T_{p}-1}\hat{z}^{i}_{t}\hat{z}^{i'}_{t}\|\leq \hat{g}(\delta),
\end{equation*}
where
\begin{equation*}
\bar{g}(\delta)=N_{r}\sum_{t=0}^{T_{r}-1}(\tr(\bar{G}_{t})+p)(\frac{1}{c}\log(\frac{2}{\delta})+1)\bar{\sigma}_{max}^2,
\end{equation*}
\begin{equation*}
\hat{g}(\delta)=N_{p}\sum_{t=0}^{T_{p}-1}(\tr(\hat{G}_{t})+p)(\frac{1}{c}\log(\frac{2}{\delta})+1)\hat{\sigma}_{max}^2,
\end{equation*}
\label{lemma:bound z}
and $c$ is some positive constant.
\end{lemma}
\begin{proof}
We will only show the first inequality since the analysis for the second one is essentially the same. 
Let $\mathbf{Z}_{T_{r}}^{N_{r}}=\begin{bmatrix}\mathbf{Z}^{1'}& \ldots& \mathbf{Z}^{N_{r}'} \end{bmatrix}'\in \mathbb{R}^{(n+p)N_{r}T_{r}}$, where $\mathbf{Z}^{i}=\begin{bmatrix} \bar{z}_{0}^{i'}& \ldots& \bar{z}_{T_{r}-1}^{i'} \end{bmatrix}'\in \mathbb{R}^{(n+p)T_{r}}$ for $i=1,\ldots,N_{r}$. We have 
\begin{equation}
\|\sum_{i=1}^{N_{r}}\sum_{t=0}^{T_{r}-1}\bar{z}_{t}^{i}\bar{z}^{i'}_{t}\|\leq \sum_{i=1}^{N_{r}}\sum_{t=0}^{T_{r}-1}\bar{z}_{t}^{i'}\bar{z}_{t}^{i}=\mathbf{Z}_{T_{r}}^{N_{r}'}\mathbf{Z}_{T_{r}}^{N_{r}}. \label{to_use5}
\end{equation} 
Note that we have
\begin{equation} \label{touse_5}
\begin{aligned}
\mathbf{Z}_{T_{r}}^{N_{r}'}\mathbf{Z}_{T_{r}}^{N_{r}}&\leq \mathbb{E}[\mathbf{Z}_{T_{r}}^{N_{r}'}\mathbf{Z}_{T_{r}}^{N_{r}}]+|\mathbf{Z}_{T_{r}}^{N_{r}'}\mathbf{Z}_{T_{r}}^{N_{r}}-\mathbb{E}[\mathbf{Z}_{T_{r}}^{N_{r}'}\mathbf{Z}_{T_{r}}^{N_{r}}]|.\\
\end{aligned}
\end{equation}
Now we will upper bound the two terms after the inequality in \eqref{touse_5}. We consider the term $\mathbb{E}[\mathbf{Z}_{T_{r}}^{N_{r}'}\mathbf{Z}_{T_{r}}^{N_{r}}]$ first. 
Let
\begin{equation*}
\begin{aligned}
H=
\begin{bmatrix}
H_{1}&H_{2}\end{bmatrix}\in\mathbb{R}^{^{(n+p)T_{r}\times (n+p)T_{r}}}, 
\end{aligned}
\end{equation*}
where $H_{1}\in\mathbb{R}^{(n+p)T_{r}\times nT_{r}}$ is defined as 
\begin{equation*}
\begin{aligned}
H_{1}=
\begin{bmatrix}
I_{n}&\mathbf{0}&\mathbf{0}&\cdots&\mathbf{0}&\mathbf{0}\\
\mathbf{0}&\mathbf{0}&\mathbf{0}&\cdots&\mathbf{0}&\mathbf{0}\\
\bar{A}&I_{n}&\mathbf{0}&\cdots&\mathbf{0}&\mathbf{0}\\
\mathbf{0}&\mathbf{0}&\mathbf{0}&\cdots&\mathbf{0}&\mathbf{0}\\
\bar{A}^2&\bar{A}&I_{n}&\cdots&\mathbf{0}&\mathbf{0}&\\
\mathbf{0}&\mathbf{0}&\mathbf{0}&\cdots&\mathbf{0}&\mathbf{0}\\
\vdots&\vdots&\vdots&\vdots&\vdots&\vdots&\\
\bar{A}^{T_{r}-1}&\bar{A}^{T_{r}-2}&\bar{A}^{T_{r}-3}&\cdots&\bar{A}&I_{n}\\
\mathbf{0}&\mathbf{0}&\mathbf{0}&\cdots&\mathbf{0}&\mathbf{0}\\
\end{bmatrix}, 
\end{aligned}
\end{equation*}
and $H_{2}\in\mathbb{R}^{(n+p)T_{r}\times pT_{r}}$ is defined as 
\begin{equation*}
\begin{aligned}
H_{2}=
\begin{bmatrix}
\mathbf{0}&\mathbf{0}&\mathbf{0}&\cdots&\mathbf{0}&\mathbf{0}\\
I_{p}&\mathbf{0}&\mathbf{0}&\cdots&\mathbf{0}&\mathbf{0}\\
\bar{B}&\mathbf{0}&\mathbf{0}&\cdots&\mathbf{0}&\mathbf{0}\\
\mathbf{0}&I_{p}&\mathbf{0}&\cdots&\mathbf{0}&\mathbf{0}\\
\bar{A}\bar{B}&\bar{B}&\mathbf{0}&\cdots&\mathbf{0}&\mathbf{0}&\\
\mathbf{0}&\mathbf{0}&I_{p}&\cdots&\mathbf{0}&\mathbf{0}\\
\vdots&\vdots&\vdots&\vdots&\vdots&\vdots&\\
\bar{A}^{T_{r}-2}\bar{B}&\bar{A}^{T_{r}-3}\bar{B}&\bar{A}^{T_{r}-4}\bar{B}&\cdots&\bar{B}&\mathbf{0}\\
\mathbf{0}&\mathbf{0}&\mathbf{0}&\cdots&\mathbf{0}&I_{p}\\
\end{bmatrix},
\end{aligned}
\end{equation*}
where we use $\mathbf{0}$ to denote zero matrices with appropriate dimensions.
Further, let $\mathbf{H}=\diag(H,\cdots,H)\in \mathbb{R}^{(n+p)N_{r}T_{r}\times (n+p)N_{r}T_{r}}$ and $g=\begin{bmatrix}
g^{1'}&g^{2'}&\cdots&g^{N_{r}'}\\
\end{bmatrix}'\in\mathbb{R}^{(n+p)N_{r}T_{r}}$, where $g^{i}=\begin{bmatrix}
\bar{x}_{0}^{i'}&\bar{w}_{0}^{i'}&\cdots&\bar{w}_{T_{r}-2}^{i'}&\bar{u}_{0}^{i'}&\cdots&\bar{u}_{T_{r}-1}^{i'}\\
\end{bmatrix}'\in\mathbb{R}^{(n+p)T_{r}}$ for $i=1,\ldots,N_{r}$. With these definitions, we have $\mathbf{H}g=\begin{bmatrix}\mathbf{Z}^{1'}& \ldots& \mathbf{Z}^{N_{r}'} \end{bmatrix}'=\mathbf{Z}_{T_{r}}^{N_{r}}$, and hence
\begin{equation} \label{touse_6}
\mathbf{Z}_{T_{r}}^{N_{r}'}\mathbf{Z}_{T_{r}}^{N_{r}}=g'\mathbf{H}'\mathbf{H}g=\tr(g'\mathbf{H}'\mathbf{H}g)=\tr(gg'\mathbf{H}'\mathbf{H}). 
\end{equation}
Taking the expectation, and from the relationship $\tr(AB)\leq \lambda_{max}(A)\tr(B)$ for real symmetric $A$ and real $B\succeq 0$ \cite{zhang2006eigenvalue},  we have
\begin{equation} \label{touse_7}
\begin{aligned}
\mathbb{E}[\tr(gg'\mathbf{H}'\mathbf{H})]&=\tr(\mathbb{E}[gg']\mathbf{H}'\mathbf{H})\leq\|\mathbb{E}[gg']\|\tr(\mathbf{H}'\mathbf{H})\\
&\leq \bar{\sigma}_{max}^2\tr(\mathbf{H}'\mathbf{H})=\bar{\sigma}_{max}^2N_{r}\tr(H'H)\\
&=\bar{\sigma}_{max}^2N_{r}(\sum_{t=0}^{T_{r}-1}\tr(\bar{G}_{t})+\sum_{k=0}^{T_{r}-1}\tr(I_{p}))\\
&=\bar{\sigma}_{max}^2N_{r}\sum_{t=0}^{T_{r}-1}(\tr(\bar{G}_{t})+p),
\end{aligned}
\end{equation}
where $\bar{G}_{t}$ is defined in \eqref{def: Gt}.
Now we consider the term $|\mathbf{Z}_{T_{r}}^{N_{r}'}\mathbf{Z}_{T_{r}}^{N_{r}}-\mathbb{E}[\mathbf{Z}_{T_{r}}^{N_{r}'}\mathbf{Z}_{T_{r}}^{N_{r}}]|$ in \eqref{touse_5}. From \eqref{touse_6}, we have
\begin{equation*}
\begin{aligned}
|\mathbf{Z}_{T_{r}}^{N_{r}'}\mathbf{Z}_{T_{r}}^{N_{r}}-\mathbb{E}[\mathbf{Z}_{T_{r}}^{N_{r}'}\mathbf{Z}_{T_{r}}^{N_{r}}]|=|g'\mathbf{H}'\mathbf{H}g-\mathbb{E}[g'\mathbf{H}'\mathbf{H}g]|.
\end{aligned}
\end{equation*}
From \cite{zhang2020concentration}, we have each component of $g$ has sub-Gaussian norm that is upper bounded by $4\bar{\sigma}_{max}$. We can apply Lemma \ref{hanson wright} to the above term with the replacement of $c_{0}$ by $\min(1,c_{0})$ to obtain
\begin{equation} \label{touse_10}
\begin{aligned}
&P(|g'\mathbf{H}'\mathbf{H}g-\mathbb{E}[g'\mathbf{H}'\mathbf{H}g]|>t)\\ &\leq 2\exp(-\min(\frac{c_{0}t^2}{256\bar{\sigma}_{max}^4\|\mathbf{H}'\mathbf{H}\|^{2}_{F}},\frac{c_{0}t}{16\bar{\sigma}_{max}^2\|\mathbf{H}'\mathbf{H}\|}))\\
&\leq 2\exp(-\min(\frac{ct^2}{\bar{\sigma}_{max}^4\|\mathbf{H}'\mathbf{H}\|^{2}_{F}},\frac{ct}{\bar{\sigma}_{max}^2\|\mathbf{H}'\mathbf{H}\|})),
\end{aligned}
\end{equation}
where $c\triangleq \frac{c_{0}}{256}$.


Fixing $\delta\in(0,\frac{2}{e})$ and setting $t=\frac{1}{c}\log(\frac{2}{\delta})\bar{\sigma}_{max}^2\tr(\mathbf{H}'\mathbf{H})$, we have

\begin{equation*}
\begin{aligned}
\frac{ct^2}{\bar{\sigma}_{max}^4\|\mathbf{H}'\mathbf{H}\|_{F}^2}&=\frac{1}{c}(\frac{\tr(\mathbf{H}'\mathbf{H})}{\|\mathbf{H}'\mathbf{H}\|_{F}})^2(\log(\frac{2}{\delta}))^2 \\
&\geq (\log(\frac{2}{\delta}))^2\geq \log(\frac{2}{\delta}),
\end{aligned}
\end{equation*} and 
\begin{equation*}
\begin{aligned}
\frac{ct}{\bar{\sigma}_{max}^2\|\mathbf{H}'\mathbf{H}\|}&=\log(\frac{2}{\delta})\frac{\tr(\mathbf{H}'\mathbf{H})}{\|\mathbf{H}'\mathbf{H}\|}\geq \log(\frac{2}{\delta}),
\end{aligned}
\end{equation*} 
where we used the fact that $\|\mathbf{H}'\mathbf{H}\|\leq\|\mathbf{H}'\mathbf{H}\|_{F}\leq \|\mathbf{H}\|^{2}_{F}= \tr(\mathbf{H}'\mathbf{H})$.

Combining the above inequalities with \eqref{touse_10}, we have with probability at least $1-\delta$
\begin{equation*}
\begin{aligned}
&|\mathbf{Z}_{T_{r}}^{N_{r}'}\mathbf{Z}_{T_{r}}^{N_{r}}-\mathbb{E}[\mathbf{Z}_{T_{r}}^{N_{r}'}\mathbf{Z}_{T_{r}}^{N_{r}}]|=|g'\mathbf{H}'\mathbf{H}g-\mathbb{E}[g'\mathbf{H}'\mathbf{H}g]|\\
&\leq \frac{1}{c}\log(\frac{2}{\delta})\bar{\sigma}_{max}^2\tr(\mathbf{H}'\mathbf{H})\\
&=\frac{1}{c}\log(\frac{2}{\delta})\bar{\sigma}_{max}^2N_{r}\sum_{t=0}^{T_{r}-1}(\tr(\bar{G}_{t})+p).
\end{aligned}
\end{equation*} 

Consequently, considering the above inequality in conjunction with \eqref{touse_7}, and from \eqref{touse_5}, we have with probability at least $1-\delta$
\begin{equation*} 
\begin{aligned}
\mathbf{Z}_{T_{r}}^{N_{r}'}\mathbf{Z}_{T_{r}}^{N_{r}}&\leq N_{r}\sum_{t=0}^{T_{r}-1}(\tr(\bar{G}_{t})+p)(\frac{1}{c}\log(\frac{2}{\delta})+1)\bar{\sigma}_{max}^2.
\end{aligned}
\end{equation*}

\end{proof}

\begin{remark} \label{c}
The constant $c$ in Lemma \ref{lemma:bound z} depends on the constant $c_{0}$ in the Hanson-Wright inequality presented in Lemma \ref{hanson wright}. More specifically, $c=\frac{\min\{1, c_0\}}{256}$, where $c_0$ is the constant from the Hanson-Wright inequality. Attempts to explicitly characterize $c_{0}$ can be found in \cite{dadush2018fast,moshksar2021absolute}. In the special case where the initial states, noises, and inputs are all Gaussian with equal variances, the paper \cite{moshksar2021absolute} shows that $c_0$ can be taken as 0.1457.  For the more general case, where all relevant stochastic signals are sub-Gaussian, we conjecture that it can be obtained by following the proof of \cite[Theorem~6.2.1]{vershynin2018high}. One can also derive similar upper bounds using the Markov inequality to get rid of the constant $c$, but at the price of having linear dependence on $\delta$ in the denominators of the bounds. We note the insights provided by Theorem \ref{data-independent bound} do not depend on the specific values of $c$. Additionally, one can also use the data-dependent bound in Theorem \ref{data-dependent bound} to obtain a computable bound for the general sub-Gaussian case.
\end{remark}

We will leverage the following definitions on $\epsilon$-net and the block martingale small-ball conditions in \cite{simchowitz2018learning}.

\begin{definition} \label{epsilon net}
Let $(T,d)$ be a metric space. Consider a subset $K\subset T$
and let $\epsilon>0$. A subset $\mathbf{N}\subseteq K$ is called an $\epsilon$-net of $K$ if every point in $K$ is within distance $\epsilon$ of some point of $\mathbf{N}$, i.e.,
\begin{equation*} 
\begin{aligned}
\forall x\in K \quad \exists x_{0}\in \mathbf{N}: d(x.x_{0})\leq \epsilon.\end{aligned}
\end{equation*}
\end{definition}

\begin{definition} \cite[Definition~2.1]{simchowitz2018learning}) \label{small ball}
Let $\{Z_{t}\}_{t\geq 1}$ be a $\{\mathcal{F}_{t}\}_{t\geq 1}$-adapted random process taking
values in $\mathbb{R}$. We say $\{Z_{t}\}_{t\geq 1}$ satisfies the $(k,v,p)$-block martingale small-ball (BMSB) condition if, for any $j \geq 0$, one has $\frac{1}{k}\sum_{i=1}^{k}P(|Z_{j+i}|\geq v |\mathcal{F}_{j})>p$ almost surely. 
\end{definition}

The following result establishes a lower bound of the smallest eigenvalue of the sample covariance matrix for general time series, leveraging the above definitions, a concentration inequality in \cite{dean2019safely}, and the ideas in \cite{matni2019tutorial}. Note that we use $v(i)$ to denote the $i$-th component of a vector $v$.

\begin{lemma} \label{PE}
Let $\{l_{t}\}_{t\geq 0}$ be a sequence of random vectors that is adapted to a filtration $\{\mathcal{F}_{t}\}_{t\geq 0}$,  where $l_{t}\in \mathbb{R}^{d}$. Let $\{\eta_{t}\}_{t\geq 1}$ be another sequence of random vectors such that $\eta_{t}$ is $\mathcal{F}_{t}$-measurable, where $\eta_{t}\in \mathbb{R}^{d}$. Further, suppose $\eta_{t+1}|\mathcal{F}_{t}$ has zero mean and independent coordinates, where each coordinate has 
bounded fourth moment for all $t\geq 0$. Suppose that $\mathbb{E}[\eta_{t+1}\eta_{t+1}'|\mathcal{F}_{t}]\succeq \sigma_{\eta}^2 I_{d}$, and $\max_{1\leq i \leq d} \frac{\mathbb{E}[\eta_{t+1}(i)^4|\mathcal{F}_{t}]}{\mathbb{E}[\eta_{t+1}(i)^2|\mathcal{F}_{t}]^2}\leq c_{\eta}$ for all $t\geq 0$, where $\sigma_{\eta},c_{\eta}\in \mathbb{R}_{>0}$.  Let $\zeta=\frac{\sigma_{\eta}}{c_{1}c_{\eta}}$, where $c_{1}$ is a positive absolute constant. Define the sequence $z_{t+1}=l_{t}+\eta_{t+1}$ for $t\geq 0$, where $z_{0} \in \mathbb{R}^{d}$. Fix $\delta>0$ and a constant $M\geq \frac{\zeta^{2} (T-1)}{16}$ such that it holds $\|\sum_{t=0}^{T-1}z_{t}z_{t}'\| \leq M$ with probability at least $1-\frac{\delta}{2}$. Then, if $T\geq 8c_{1}^2c_{\eta}^2(\log\frac{2}{\delta}+d\log\frac{144M}{\zeta^2(T-1)})+1$, we have with probability at least $1-\delta$,
\begin{equation*}
\sum_{t=0}^{T-1}z_{t}z_{t}' \succeq \frac{\zeta^2 (T-1)}{32}I_{d}.
\end{equation*}
\end{lemma}

\begin{proof}
Note that for any fixed $v\in \mathcal{S}^{d-1}$ and $t\geq 0$, we have
\begin{equation*}
v'z_{t+1}|\mathcal{F}_{t}=v'l_{t}|\mathcal{F}_{t}+v'\eta_{t+1}|\mathcal{F}_{t}.
\end{equation*}
From Proposition \ref{anti}, we have
\begin{equation*}
P(|v'z_{t+1}|\geq \sqrt{\frac{\sigma_{\eta}^2}{2}}|\mathcal{F}_{t})\geq\frac{1}{c_{1}\times c_{\eta}}
\end{equation*}
almost surely. Since the scalar process $\{v'z_{t}\}_{t\geq 1}$ is adapted to the filtration $\{\mathcal{F}_{t}\}_{t\geq 1}$, the above inequality implies that $v'z_{t}$ satisfies the $(1,\frac{\sqrt{2}\sigma_{\eta}}{2},\frac{1}{c_{1} \times c_{\eta}})$ BMSB condition (see Definition \ref{small ball}). 
Denoting $m=\frac{\zeta^2(T-1)}{16}$, we can now apply Lemma \ref{small ball PE} to obtain
\begin{equation}\label{fixed v}
\begin{aligned}
P(\sum_{t=1}^{T-1} (v' z_{t})^2\leq m)&\leq \exp(\frac{-(T-1)}{8c_{1}^2c_{\eta}^2})\\
&\leq \exp(\log \frac{\delta}{2}+\log(\frac{\zeta^2 (T-1)}{144M})^d) \\
&=\frac{\delta}{2} (\frac{m}{9M})^d,
\end{aligned}
\end{equation}
where the last inequality is due to our assumption that $T\geq 8c_{1}^2c_{\eta}^2(\log\frac{2}{\delta}+d\log\frac{144M}{\zeta^2(T-1)})+1$.

Since $M\geq m$, we can let $\mathbf{N}(\frac{m}{4M})$ be a $\frac{m}{4M}$- net of $\mathcal{S}^{d-1}$ with the smallest cardinality (see Definition \ref{epsilon net}). From Lemma \ref{covering number}, we know that there are at most $(\frac{9M}{m})^d$ elements in $\mathbf{N}(\frac{m}{4M})$. Applying a union bound to combine the events in \eqref{fixed v} for all $v \in \mathbf{N}(\frac{m}{4M})$, we have with probability at least $1-\frac{\delta}{2}$
\begin{equation} \label{net event}
\begin{aligned}
\min_{v\in \mathbf{N}(\frac{m}{4M})}\sum_{t=1}^{T} (v' z_{t})^2&\geq m.
\end{aligned}
\end{equation}
Note that for any realization of the matrix $\sum_{t=0}^{T-1}z_{t}z_{t}'$, we can fix a $v^*\in \mathcal{S}^{d-1}$ such that $\lambda_{min}(\sum_{t=0}^{T-1}z_{t}z_{t}') = \sum_{t=0}^{T-1}v^{*'}z_{t}z_{t}'v^{*}$, and let $v_{0}\in \mathbf{N}(\frac{m}{4M})$ be a vector such that $\|v^*-v_{0}\|\leq \frac{m}{4M}$, to obtain
\begin{equation*}
\begin{aligned}
&\lambda_{min}(\sum_{t=0}^{T-1}z_{t}z_{t}') =\sum_{t=0}^{T-1}(v_{0}+v^{*}-v_{0})'z_{t}z_{t}'(v_{0}+v^{*}-v_{0})\\
&=\sum_{t=0}^{T-1}v_{0}'z_{t}z_{t}'v_{0}+\sum_{t=0}^{T-1}v_{0}'z_{t}z_{t}'(v^{*}-v_{0})\\
&\quad\quad\quad +\sum_{t=0}^{T-1}(v^{*}-v_{0})'z_{t}z_{t}'v_{0}+\sum_{t=0}^{T-1}(v^{*}-v_{0})'z_{t}z_{t}'(v^{*}-v_{0})\\
&\geq \sum_{t=0}^{T-1}(v_{0}'z_{t})^2-\frac{m}{2M}\|\sum_{t=0}^{T-1}z_{t}z_{t}\|\\
&\geq \sum_{t=1}^{T-1}(v_{0}'z_{t})^2-\frac{m}{2M}\|\sum_{t=0}^{T-1}z_{t}z_{t}\|\\
&\geq \min_{v\in \mathbf{N}(\frac{m}{4M})} \sum_{t=1}^{T-1}(v'z_{t})^2-\frac{m}{2M}\|\sum_{t=0}^{T-1}z_{t}z_{t}\|.\\
\end{aligned}
\end{equation*}
Applying a union bound to combine the event $\|\sum_{t=0}^{T-1}z_{t}z_{t}'\| \leq M$ and the event in \eqref{net event}, we have with probability at least $1-\delta$,
\begin{equation*}
\begin{aligned}
&\lambda_{min}(\sum_{t=0}^{T-1}z_{t}z_{t}') \geq \frac{m}{2}= \frac{\zeta^2(T-1)}{32}.\\
\end{aligned}
\end{equation*}
\end{proof}

We have the following result.
\begin{lemma}
Let Assumption \ref{assumption} hold. Denote $\bar{\zeta}=\frac{\bar{\sigma}_{min}}{c_{1}\bar{\sigma}_{*}}$ and $\hat{\zeta}=\frac{\hat{\sigma}_{min}}{c_{1}\hat{\sigma}_{*}}$, where $c_{1}$ is a positive absolute constant. Fixing $\delta\in(0,1)$, suppose that $N_{r}T_{r}\geq 8c_{1}^2\bar{\sigma}_{*}^2(\log\frac{2}{\delta}+(n+p)\log\frac{144\bar{g}(\frac{\delta}{2})}{\bar{\zeta}^2(N_{r}T_{r}-1)})+1$, $N_{p}T_{p}\geq 8c_{1}^2\hat{\sigma}_{*}^2(\log\frac{2}{\delta}+(n+p)\log\frac{144\hat{g}(\frac{\delta}{2})}{\hat{\zeta}^2(N_{p}T_{p}-1)})+1$, $\bar{g}(\frac{\delta}{2}) \geq \frac{\bar{\zeta}^2(N_{r}T_{r}-1)}{16}$, and $\hat{g}(\frac{\delta}{2}) \geq \frac{\hat{\zeta}^2(N_{p}T_{p}-1)}{16}$,  where $\bar{g}(\frac{\delta}{2}), \hat{g}(\frac{\delta}{2})$ are defined in Lemma \ref{lemma:bound z}.  Then, we have with probability at least $(1-\delta)^2$
\begin{equation*}
ZQZ'\succeq \frac{(N_{r}T_{r}-1)\bar{\zeta}^2+q(N_{p}T_{p}-1)\hat{\zeta}^2}{32} I_{n+p}.
\end{equation*}
\label{prop:PE}
\end{lemma}
\begin{proof}
We have 
\begin{equation}
\begin{aligned} \label{touse_3}
ZQZ'&=\bar{Z}\bar{Z}'+q\hat{Z}\hat{Z}'=\sum_{i=1}^{N_{r}}\sum_{t=0}^{T_{r}-1}\bar{z}^{i}_{t}\bar{z}^{i'}_{t}+q\sum_{j=1}^{N_{p}}\sum_{k=0}^{T_{p}-1}\hat{z}^{j}_{k}\hat{z}^{j'}_{k}.\\
\end{aligned}
\end{equation}

We now focus on the first summation in \eqref{touse_3} since the analysis for the second one is essentially the same. We can define the sequence $\{z_{t}\}_{t\geq0}$ as
\begin{equation*}
z_{t}=\begin{cases}
  \bar{z}^{1}_{t}  & \text{if} \quad 0\leq t\leq T_{r}-1 \\
  \bar{z}^{2}_{t-T_{r}} &  \text{if} \quad T_{r}\leq t\leq 2T_{r}-1 \\
   \bar{z}^{3}_{t-2T_{r}} &  \text{if} \quad 2T_{r}\leq t\leq 3T_{r}-1\\
  \vdots&  \vdots ,\\
\end{cases}
\end{equation*}
 where $\bar{z}^{a}_{b}$ for $a>N_{r}$ and $b=0,\ldots, T_{r}-1$ are generated using the same way as $\bar{z}^{a}_{b}$ for $a=1,\ldots,N_{r}$ and $b=0,\ldots, T_{r}-1$. In words, $z_{t}$ is the sequence formed by concatenating the sequence $\{\bar{z}_{t}^{1}\}_{t=0}^{T_{r}-1}, \{\bar{z}_{t}^{2}\}_{t=0}^{T_{r}-1}, \ldots$. The sequences $\{w_{t}\}_{t\geq0}$ and $\{u_{t}\}_{t\geq0}$ are defined similarly using the signals $\bar{w}^{i}_{t}$ and $\bar{u}^{i}_{t}$. Further, we define the sequence $\{x_{t}\}_{t\geq 0}$ as $x_{t}=\bar{x}^{t+1}_{0}$, where $\bar{x}^{a}_{0}$ for $a>N_{r}$ are generated using the same way as $\bar{x}^{a}_{0}$ for $a=1,\ldots,N_{r}$. With these definitions, we have

\begin{equation*}
\begin{aligned}
&\sum_{i=1}^{N_{r}}\sum_{t=0}^{T_{r}-1}\bar{z}_{t}^{i}\bar{z}^{i'}_{t}=\sum_{t=0}^{N_{r}T_{r}-1}z_{t}z_{t}'.
\end{aligned}
\end{equation*}

We now verify the conditions in Lemma \ref{PE}. Note that for $t$ satisfying $t\geq 0$ and $(t+1)\bmod T_{r}\neq 0$, we have
\begin{equation*}
\begin{aligned}
z_{t+1}=
\begin{bmatrix}
\Theta z_{t}\\
0\end{bmatrix}
+
\begin{bmatrix}
w_{t}\\
u_{t+1}\end{bmatrix}. 
\end{aligned}
\end{equation*}
For  $t$ satisfying $t\geq 0$ and $(t+1)\bmod T_{r}= 0$, we have 
\begin{equation*}
\begin{aligned}
z_{t+1}=
\begin{bmatrix}
x_{\frac{t+1}{T_{r}}}\\
u_{t+1}\end{bmatrix}.
\end{aligned}
\end{equation*}

Let $l_{t}=\begin{bmatrix}
\Theta z_{t}\\
0\end{bmatrix}$ and $\eta_{t+1}=\begin{bmatrix}
w_{t}\\
u_{t+1}\end{bmatrix}$ for $t$ satisfying $t\geq 0$ and $(t+1)\bmod T_{r}\neq 0$. Similarly, let
$l_{t}=\mathbf{0}$ and $\eta_{t+1}=\begin{bmatrix}
x_{\frac{t+1}{T_{r}}}\\
u_{t+1}\end{bmatrix}$ for $t$ satisfying $t\geq 0$ and $(t+1)\bmod T_{r}= 0$.  Define the filtration $\{\mathcal{F}_{t}\}_{t\geq 0}$, where $\mathcal{F}_{t}=\sigma(\{l_{i}\}_{i=0}^{t}\cup\{\eta_{j}\}_{j=1}^{t})$. We have $l_{t}$ is $\mathcal{F}_{t}$-measurable for $t\geq 0$, $\eta_{t}$ is $\mathcal{F}_{t}$-measurable for $t\geq 1$, $\mathbb{E}[\eta_{t+1}\eta_{t+1}'|\mathcal{F}_{t}]\succeq \bar{\sigma}_{min}^2I_{n+p}$ and $\max_{1\leq i \leq (n+p)} \frac{\mathbb{E}[\eta_{t+1}(i)^4|\mathcal{F}_{t}]}{\mathbb{E}[\eta_{t+1}(i)^2|\mathcal{F}_{t}]^2}\leq \bar{\sigma}_{*}$ for $t\geq 0$ due to our assumption.

Fixing $\delta\in(0,1)$, from Lemma \ref{lemma:bound z}, we have $\|\sum_{t=0}^{N_{r}T_{r}-1}z_{t}z_{t}'\|\leq \bar{g}(\frac{\delta}{2})$ with probability at least $1-\frac{\delta}{2}$. Consequently, letting $N_{r}T_{r}\geq 8c_{1}^2\bar{\sigma}_{*}^2(\log\frac{2}{\delta}+(n+p)\log\frac{144\bar{g}(\frac{\delta}{2})}{\bar{\zeta}^2(N_{r}T_{r}-1)})+1$, we can apply Lemma \ref{PE} to get
\begin{equation*}
\begin{aligned}
\sum_{i=1}^{N_{r}}\sum_{t=0}^{T_{r}-1}\bar{z}_{t}^{i}\bar{z}^{i'}_{t}=\sum_{t=0}^{N_{r}T_{r}-1}z_{t}z_{t}'&\succeq \frac{(N_{r}T_{r}-1)\bar{\zeta}^2}{32}I_{n+p}
\end{aligned}
\end{equation*}
with probability at least $1-\delta$. Applying a similar procedure for the second summation in \eqref{touse_3} and leveraging the independence of data, we have with probability at least $(1-\delta)^2$
\begin{equation*}
\begin{aligned}
&\sum_{i=1}^{N_{r}}\sum_{t=0}^{T_{r}-1}\bar{z}^{i}_{t}\bar{z}^{i'}_{t}+q\sum_{j=1}^{N_{p}}\sum_{k=0}^{T_{p}-1}\hat{z}^{j}_{k}\hat{z}^{j'}_{k}\\
&\succeq \frac{(N_{r}T_{r}-1)\bar{\zeta}^2+q(N_{p}T_{p}-1)\hat{\zeta}^2}{32} I_{n+p}.
\end{aligned}
\end{equation*}
\end{proof}


We will use the following lemma, which provides an upper bound for self-normalized martingales.
\begin{lemma}(\cite[Theorem~1]{abbasi2011improved} )
\label{martingale_bound}
Let $\{\mathcal{F}_{t}\}_{t\geq 0}$ be a filtration. Let $\{{w}_{t}\}_{t\geq 1}$ be a real valued stochastic process such that $w_{t}$ is $\mathcal{F}_{t}$-measurable, and $w_{t}$ is conditionally sub-Gaussian on $\mathcal{F}_{t-1}$ with parameter $R^2$. Let $\{z_{t}\}_{t\geq 1}$ be an $\mathbb{R}^{m}$-valued stochastic process such that $z_{t}$
is $\mathcal{F}_{t-1}$-measurable. Assume that $V$ is a $m\times m$ dimensional positive definite matrix. For all $t\geq 0$, define
\begin{equation*}
\begin{aligned}
&\bar{V}_{t}=V+\sum_{s=1}^{t}z_{s}z_{s}', S_{t}=\sum_{s=1}^{t}w_{s}z_{s}.\\
\end{aligned}
\end{equation*}
Then, for any $\delta>0$, and for all $t\geq0$,
\begin{equation*}
\begin{aligned}
&P(\|\bar{V}_{t}^{-\frac{1}{2}}S_{t}\|^2\leq2R^2\log(\frac{\det(\bar{V}_{t}^{\frac{1}{2}})\det(V^{-\frac{1}{2}})}{\delta}))\geq 1-\delta.\\
\end{aligned}
\end{equation*}
\end{lemma}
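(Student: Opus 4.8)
The plan is to prove this self-normalized tail bound by the \emph{method of mixtures} (pseudo-maximization), the standard technique for such results. Write $W_{t}=\sum_{s=1}^{t}z_{s}z_{s}'$, so that $\bar{V}_{t}=V+W_{t}$. For each fixed $\lambda\in\mathbb{R}^{m}$ I would introduce the exponential process
\begin{equation*}
M_{t}^{\lambda}=\exp\Big(\lambda'S_{t}-\tfrac{R^{2}}{2}\lambda'W_{t}\lambda\Big),\qquad M_{0}^{\lambda}=1,
\end{equation*}
and the first goal is to show that $\{M_{t}^{\lambda}\}_{t\geq 0}$ is a nonnegative supermartingale adapted to $\{\mathcal{F}_{t}\}$ with $\mathbb{E}[M_{t}^{\lambda}]\leq 1$. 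Since $z_{s}$ is $\mathcal{F}_{s-1}$-measurable, the scalar $\lambda'z_{s}$ is also $\mathcal{F}_{s-1}$-measurable, so applying the conditional sub-Gaussian hypothesis on $w_{s}$ with the choice $\alpha=\lambda'z_{s}$ gives
\begin{equation*}
\mathbb{E}\big[\exp(\lambda'z_{s}w_{s})\,\big|\,\mathcal{F}_{s-1}\big]\leq \exp\big(\tfrac{R^{2}}{2}(\lambda'z_{s})^{2}\big).
\end{equation*}
Hence the per-step multiplicative increment $\exp\big(\lambda'z_{s}w_{s}-\tfrac{R^{2}}{2}(\lambda'z_{s})^{2}\big)$ has conditional expectation at most $1$, and telescoping yields $\mathbb{E}[M_{t}^{\lambda}\mid\mathcal{F}_{t-1}]\leq M_{t-1}^{\lambda}$ and therefore $\mathbb{E}[M_{t}^{\lambda}]\leq 1$.

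The core of the argument is then to eliminate $\lambda$ by mixing. I would integrate $M_{t}^{\lambda}$ against the density $h$ of a zero-mean Gaussian prior on $\lambda$ with covariance $(R^{2}V)^{-1}$, defining $M_{t}=\int_{\mathbb{R}^{m}}M_{t}^{\lambda}\,h(\lambda)\,d\lambda$. By Tonelli's theorem, $\mathbb{E}[M_{t}]=\int \mathbb{E}[M_{t}^{\lambda}]\,h(\lambda)\,d\lambda\leq 1$, and $M_{t}$ is again a nonnegative supermartingale. The crucial computation is a Gaussian integral: completing the square in the exponent, which combines the $-\tfrac{R^{2}}{2}\lambda'W_{t}\lambda$ term of $M_{t}^{\lambda}$ with the $-\tfrac{R^{2}}{2}\lambda'V\lambda$ term from the prior into the single quadratic form $-\tfrac{R^{2}}{2}\lambda'\bar{V}_{t}\lambda$, and tracking the normalizing determinants, produces exactly
\begin{equation*}
M_{t}=\Big(\frac{\det(V)}{\det(\bar{V}_{t})}\Big)^{1/2}\exp\Big(\frac{1}{2R^{2}}\big\|\bar{V}_{t}^{-\frac{1}{2}}S_{t}\big\|^{2}\Big).
\end{equation*}

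To finish, I would apply Markov's inequality to the nonnegative random variable $M_{t}$: since $\mathbb{E}[M_{t}]\leq 1$, we have $P(M_{t}\geq 1/\delta)\leq\delta$, so on the complementary event (probability at least $1-\delta$) we may take logarithms in $M_{t}<1/\delta$ and rearrange, using $(\det(\bar{V}_{t})/\det(V))^{1/2}=\det(\bar{V}_{t}^{\frac{1}{2}})\det(V^{-\frac{1}{2}})$, to obtain the stated bound $\|\bar{V}_{t}^{-\frac{1}{2}}S_{t}\|^{2}\leq 2R^{2}\log\big(\det(\bar{V}_{t}^{\frac{1}{2}})\det(V^{-\frac{1}{2}})/\delta\big)$.

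The main obstacle is the mixing step, and it is subtle in two respects. First, the prior covariance must be chosen as precisely $(R^{2}V)^{-1}$: this is the unique choice that makes the combined quadratic form equal $R^{2}\bar{V}_{t}$, so that the Gaussian integral yields both the self-normalizing factor $\bar{V}_{t}^{-\frac{1}{2}}$ and the matching determinant ratio. Getting the $R$-dependence and the determinant bookkeeping to cancel correctly is where the computation must be done carefully. Second, one must justify that mixing preserves the supermartingale/expectation bound, i.e., the interchange of expectation and the $\lambda$-integral (Tonelli, using nonnegativity) and the measurability of $M_{t}$. I note that the stated conclusion is the fixed-$t$ version, for which Markov's inequality suffices; the stronger anytime (uniform-in-$t$) form would instead require Ville's maximal inequality for nonnegative supermartingales or a stopping-time argument applied to $M_{t}$.
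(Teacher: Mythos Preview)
Your proof is correct and is precisely the standard method-of-mixtures argument from the cited reference \cite{abbasi2011improved}. Note, however, that the paper does not give its own proof of this lemma: it is stated as a direct citation of \cite[Theorem~1]{abbasi2011improved} and used as a black box, so there is no in-paper proof to compare against. Your write-up faithfully reproduces the original argument, including the correct choice of prior covariance $(R^{2}V)^{-1}$ and the observation that the fixed-$t$ statement needs only Markov's inequality rather than Ville's maximal inequality.
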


The following lemma generalizes the above result to the case where $w_{t}$ is multi-dimensional, and will be used to bound the error term $\|WQZ'(ZQZ')^{-\frac{1}{2}}\|$. The proof is similar to \cite[Proposition~V.4]{matni2019tutorial}.
\begin{lemma} \label{martingale_bound_multi}
Let $\{\mathcal{F}_{t}\}_{t\geq 0}$ be a filtration. Let $\{{w}_{t}\}_{t\geq 1}$ be a  $\mathbb{R}^{n}$-valued stochastic process such that $w_{t}$ is $\mathcal{F}_{t}$-measurable, and $w_{t}$ is conditionally sub-Gaussian on $\mathcal{F}_{t-1}$ with parameter $R^2$. Let $\{z_{t}\}_{t\geq 1}$ be a $\mathbb{R}^{m}$-valued stochastic process such that $z_{t}$
is $\mathcal{F}_{t-1}$-measurable. Assume that $V$ is a $m\times m$ dimensional positive definite matrix. For all $t\geq 0$, define
\begin{equation*}
\begin{aligned}
&\bar{V}_{t}=V+\sum_{s=1}^{t}z_{s}z_{s}', S_{t}=\sum_{s=1}^{t}z_{s}w_{s}'.\\
\end{aligned}
\end{equation*}
Then, for any $\delta>0$, and for all $t\geq0$,
\begin{equation*}
\begin{aligned}
&P(\|\bar{V}_{t}^{-\frac{1}{2}}S_{t}\|\leq\sqrt{\frac{32}{9}R^{2}(\log\frac{9^n}{\delta}+\frac{1}{2}\log\det(\bar{V}_{t}V^{-1})})\\
&\geq 1-\delta.\\
\end{aligned}
\end{equation*}
\end{lemma}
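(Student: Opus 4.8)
The plan is to reduce the vector-valued statement to the scalar self-normalized bound of Lemma~\ref{martingale_bound} through a covering argument on the unit sphere $\mathcal{S}^{n-1}$. Write $M\triangleq\bar{V}_t^{-1/2}S_t\in\mathbb{R}^{m\times n}$ and recall that $\|M\|=\sup_{v\in\mathcal{S}^{n-1}}\|Mv\|$. The first step is to observe that, for any fixed unit vector $v$, the quantity $Mv$ has exactly the form handled by Lemma~\ref{martingale_bound}. Indeed, since $S_t=\sum_{s=1}^{t}z_s w_s'$, we have $S_t v=\sum_{s=1}^{t}z_s(w_s'v)=\sum_{s=1}^{t}\tilde{w}_s z_s$, where $\tilde{w}_s\triangleq v'w_s$ is a scalar process. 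Because $\tilde{w}_s$ is $\mathcal{F}_s$-measurable and, by the definition of an $R^2$ sub-Gaussian vector, conditionally $R^2$ sub-Gaussian on $\mathcal{F}_{s-1}$, Lemma~\ref{martingale_bound} applies to the pair $(\tilde{w}_s,z_s)$. Hence for each fixed $v$ and any $\delta'>0$, with probability at least $1-\delta'$,
\begin{equation*}
\|Mv\|^2=\|\bar{V}_t^{-1/2}S_t v\|^2\le 2R^2\Big(\log\tfrac{1}{\delta'}+\tfrac12\log\det(\bar{V}_t V^{-1})\Big),
\end{equation*}
where I have used the identity $\det(\bar{V}_t^{1/2})\det(V^{-1/2})=(\det(\bar{V}_t V^{-1}))^{1/2}$.

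The second step handles the supremum over the sphere by discretization. I would fix a $\tfrac14$-net $\mathcal{N}$ of $\mathcal{S}^{n-1}$; a standard volumetric bound gives $|\mathcal{N}|\le(1+2/\epsilon)^n=9^n$ for $\epsilon=\tfrac14$. The usual net inequality for operator norms states $\|M\|\le\tfrac{1}{1-\epsilon}\max_{v\in\mathcal{N}}\|Mv\|=\tfrac43\max_{v\in\mathcal{N}}\|Mv\|$, which follows by selecting $v^\star$ attaining the supremum, choosing $v\in\mathcal{N}$ with $\|v^\star-v\|\le\tfrac14$, and using $\|Mv^\star\|\le\|Mv\|+\|M\|\cdot\tfrac14$. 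Applying the fixed-$v$ bound with $\delta'=\delta/9^n$ and taking a union bound over the $9^n$ points of $\mathcal{N}$, we obtain that with probability at least $1-\delta$,
\begin{equation*}
\max_{v\in\mathcal{N}}\|Mv\|^2\le 2R^2\Big(\log\tfrac{9^n}{\delta}+\tfrac12\log\det(\bar{V}_t V^{-1})\Big)
\end{equation*}
holds simultaneously over the net. Combining this with $\|M\|^2\le\tfrac{16}{9}\max_{v\in\mathcal{N}}\|Mv\|^2$ produces the factor $\tfrac{16}{9}\cdot 2=\tfrac{32}{9}$, and taking square roots yields the claimed inequality.

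The step I expect to require the most care is matching the constants: both the factor $\tfrac{32}{9}$ and the cardinality $9^n$ are artifacts of committing to the resolution $\epsilon=\tfrac14$, so one must fix this choice to reproduce the stated bound (a general $\epsilon$-net would instead give $\tfrac{2}{(1-\epsilon)^2}R^2$ and $(1+2/\epsilon)^n$). The only genuine subtlety is verifying the conditional sub-Gaussianity of the \emph{scalar} projections $\tilde{w}_s=v'w_s$ for the net points, but this is immediate from the definition of a vector-valued $R^2$ sub-Gaussian random variable; and since $\mathcal{N}$ is a fixed, non-random set, the union bound is taken over deterministic events with no measurability complications. Everything else is bookkeeping with the determinant identity and the net inequality.
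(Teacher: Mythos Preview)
Your proposal is correct and follows essentially the same approach as the paper: reduce to the scalar self-normalized bound (Lemma~\ref{martingale_bound}) via projections $v'w_s$, cover $\mathcal{S}^{n-1}$ by a $\tfrac14$-net of cardinality at most $9^n$, apply a union bound with $\delta'=\delta/9^n$, and use the net inequality $\|M\|\le\tfrac{4}{3}\max_{v\in\mathcal{N}}\|Mv\|$ to produce the $\tfrac{32}{9}$ constant. The constants, the determinant identity, and the sub-Gaussianity check all match the paper's argument.
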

\begin{proof}
We have
\begin{equation*}
\begin{aligned}
&\|\bar{V}_{t}^{-\frac{1}{2}}S_{t}\|= \|\bar{V}_{t}^{-\frac{1}{2}}\sum_{s=1}^{t}z_{s}w_{s}'\|=\sup_{v\in \mathcal{S}^{n-1}}\|\bar{V}_{t}^{-\frac{1}{2}}\sum_{s=1}^{t}z_{s}w_{s}'v\|.\\
\end{aligned}
\end{equation*}
Note that for any fixed unit vector $v\in \mathcal{S}^{n-1}$, the random variable $w_{s}^{'}v$ is conditionally sub-Gaussian with parameter $R^{2}$. 
Let $\mathbf{N}(\frac{1}{4})$ be a $\frac{1}{4}$- net of $\mathcal{S}^{n-1}$ with the smallest cardinality (see Definition \ref{epsilon net}). From Lemma \ref{covering number}, we know that there are at most $9^n$ elements in $\mathbf{N}(\frac{1}{4})$. For any fixed $\delta\in(0,1)$ and $v\in \mathbf{N}(\frac{1}{4})$, we can apply Lemma \ref{martingale_bound} to obtain with probability at least $1-\frac{\delta}{9^n}$
\begin{equation*}
\begin{aligned}
\|\bar{V}_{t}^{-\frac{1}{2}}\sum_{s=1}^{t}z_{s}w_{s}'v\|^{2}&\leq2R^{2}\log\frac{9^n\det(\bar{V}_{t}^{\frac{1}{2}})\det(V^{-\frac{1}{2}})}{\delta}\\
&=2R^{2}(\log\frac{9^n}{\delta}+\frac{1}{2}\log\det(\bar{V}_{t}V^{-1})).
\end{aligned}
\end{equation*}

Applying a union bound over all $9^n$ events, from Lemma \ref{cover bound}, we have with probability at least $1-\delta$
\begin{equation*}
\begin{aligned}
\|\bar{V}_{t}^{-\frac{1}{2}}\sum_{s=1}^{t}z_{s}w_{s}'\|&\leq\frac{4}{3}\sup_{v\in \mathbf{N}(\frac{1}{4})}\|\bar{V}_{t}^{-\frac{1}{2}}\sum_{s=1}^{t}z_{s}w_{s}'v\|\\
&\leq\sqrt{\frac{32}{9}R^{2}(\log\frac{9^n}{\delta}+\frac{1}{2}\log\det(\bar{V}_{t}V^{-1})}.
\end{aligned}
\end{equation*}

\end{proof}

\subsection{Proof of Theorem \ref{data-independent bound}}

\begin{proof}
Recall that the system identification error in \eqref{error_W} (using $\lambda=0$) satisfies
\begin{equation}
\begin{aligned}
\|\Theta_{WLS}-\Theta\|&\leq \|(ZQZ')^{-\frac{1}{2}}ZQW'\|\|(ZQZ')^{-\frac{1}{2}}\|\\
&+\|\Delta QZ'\|\|(ZQZ')^{-1}\|,
\end{aligned}
\end{equation}
under the invertibility assumption. Let $N_{r}T_{r}, N_{p}T_{p}, \delta$ satisfy the conditions in Theorem \ref{data-independent bound}, and let $V=\frac{N_{r}T_{r}\bar{\zeta}^2+qN_{p}T_{p}\hat{\zeta}^2}{33} I_{n+p}$. From Lemma \ref{prop:PE}, we have with probability at least $1-2\delta$
\begin{equation} 
ZQZ'\succeq V, \label{event1}
\end{equation} 
conditioning on which we have
\begin{equation} \label{event9}
\|(ZQZ')^{-\frac{1}{2}}\|\leq\|V^{-\frac{1}{2}}\|,
\end{equation}
where we used the relationship $\frac{(N_{r}T_{r}-1)\bar{\zeta}^2+q(N_{p}T_{p}-1)\hat{\zeta}^2}{32}\geq \frac{N_{r}T_{r}\bar{\zeta}^2+qN_{p}T_{p}\hat{\zeta}^2}{33}$ when $\min\{N_{r}T_{r}, N_{p}T_{p}\}\geq 33$ in \eqref{event1}, and Lemma \ref{inverse bound} in conjunction with Lemma \ref{root bound} for \eqref{event9}.

Further, conditioning on \eqref{event1}, we also have  $ZQZ' \succeq V \Rightarrow  2ZQZ' \succeq ZQZ'+V\Rightarrow (ZQZ')^{-1} \preceq 2 (ZQZ'+V)^{-1}$, where we used Lemma \ref{inverse bound}. Applying Lemma \ref{norm bound}, we have
\begin{equation}  \label{event4}
\|(ZQZ')^{-\frac{1}{2}}ZQW'\|\leq\sqrt{2}\|(ZQZ'+V)^{-\frac{1}{2}}ZQW'\|.
\end{equation} 
Next, to use Lemma \ref{lemma:bound z}, we define a new pair of sequences $\{z_{t}\}_{t\geq1}$ and $\{w_{t}\}_{t\geq1}$ using the signals used in the terms $ZQZ'=\sum_{i=1}^{N_{r}}\sum_{t=0}^{T_{r}-1}\bar{z}^{i}_{t}\bar{z}^{i'}_{t}+q\sum_{j=1}^{N_{p}}\sum_{k=0}^{T_{p}-1}\hat{z}^{j}_{k}\hat{z}^{j'}_{k}$ and $ZQW'=\sum_{i=1}^{N_{r}}\sum_{t=0}^{T_{r}-1}\bar{z}^{i}_{t}\bar{w}^{i'}_{t}+q\sum_{j=1}^{N_{p}}\sum_{k=0}^{T_{p}-1}\hat{z}_{k}^{j}\hat{w}^{j'}_{k}$. That is,
\begin{equation*}
\medmath{z_{t}=\begin{cases}
  \bar{z}^{1}_{t-1}  & \text{if} \quad 1\leq t\leq T_{r} \\
  \bar{z}^{2}_{t-T_{r}-1} &  \text{if} \quad T_{r}+1\leq t\leq 2T_{r} \\
   \bar{z}^{3}_{t-2T_{r}-1} &  \text{if} \quad 2T_{r}+1\leq t\leq 3T_{r}\\
  \vdots&  \vdots \\
 \bar{z}^{N_{r}}_{t-(N_{r}-1)T_{r}-1} &  \text{if} \quad (N_{r}-1)T_{r}+1\leq t\leq N_{r}T_{r}\\
  \sqrt{q}\hat{z}^{1}_{t-N_{r}T_{r}-1} &  \text{if} \quad N_{r}T_{r}+1\leq t\leq N_{r}T_{r}+T_{p}\\
   \sqrt{q}\hat{z}^{2}_{t-N_{r}T_{r}-T_{p}-1} &  \text{if} \quad N_{r}T_{r}+T_{p}+1\leq t\leq N_{r}T_{r}+2T_{p} \\
   \sqrt{q}\hat{z}^{3}_{t-N_{r}T_{r}-2T_{p}-1} &  \text{if} \quad N_{r}T_{r}+2T_{p}+1\leq t\leq N_{r}T_{r}+3T_{p}\\
   \vdots&  \vdots ,\\
\end{cases}}
\end{equation*}
and 
\begin{equation*}
\medmath{w_{t}=\begin{cases}
  \bar{w}^{1}_{t-1}  & \text{if} \quad 1\leq t\leq T_{r} \\
  \bar{w}^{2}_{t-T_{r}-1} &  \text{if} \quad T_{r}+1\leq t\leq 2T_{r} \\
   \bar{w}^{3}_{t-2T_{r}-1} &  \text{if} \quad 2T_{r}+1\leq t\leq 3T_{r}\\
  \vdots&  \vdots \\
 \bar{w}^{N_{r}}_{t-(N_{r}-1)T_{r}-1} &  \text{if} \quad (N_{r}-1)T_{r}+1\leq t\leq N_{r}T_{r}\\
  \sqrt{q}\hat{w}^{1}_{t-N_{r}T_{r}-1} &  \text{if} \quad N_{r}T_{r}+1\leq t\leq N_{r}T_{r}+T_{p}\\
   \sqrt{q}\hat{w}^{2}_{t-N_{r}T_{r}-T_{p}-1} &  \text{if} \quad N_{r}T_{r}+T_{p}+1\leq t\leq N_{r}T_{r}+2T_{p} \\
   \sqrt{q}\hat{w}^{3}_{t-N_{r}T_{r}-2T_{p}-1} &  \text{if} \quad N_{r}T_{r}+2T_{p}+1\leq t\leq N_{r}T_{r}+3T_{p}\\
   \vdots&  \vdots ,\\
\end{cases}}
\end{equation*}
where $\sqrt{q}\hat{z}^{a}_{b}, \sqrt{q}\hat{w}^{a}_{b}$ for $a>N_{p}$ and $b=0,\ldots, T_{p}-1$ are generated using the same way as $\sqrt{q}\hat{z}^{a}_{b}, \sqrt{q}\hat{w}^{a}_{b}$ for $a=1,\ldots,N_{p}$ and $b=0,\ldots, T_{p}-1$.

Consequently, we have 
\begin{equation*}
\begin{aligned}
ZQZ'=\sum_{t=1}^{N_{r}T_{r}+N_{p}T_{p}}z_{t}z_{t}'
\end{aligned},
\end{equation*} and
\begin{equation*}
\begin{aligned}
ZQW'=\sum_{t=1}^{N_{r}T_{r}+N_{p}T_{p}}z_{t}w_{t}'.
\end{aligned}
\end{equation*}
Now define the filtration $\{\mathcal{F}_{t}\}_{t\geq 0}$, where $\mathcal{F}_{t}=\sigma(\{z_{i+1}\}_{i=0}^{t}\cup\{w_{j}\}_{j=1}^{t})$.
With these definitions, we can see that the noise terms $w_{t}$ are $\mathcal{F}_{t}$-measurable, and $w_{t}|\mathcal{F}_{t-1}$ are sub-Gaussian with parameter $\max(\sigma_{\bar{w}}^2, q\sigma_{\hat{w}}^2)$ for all $t\geq1$. Consequently, we can apply Lemma \ref{martingale_bound_multi} to obtain with probability at least $1-\delta$
\begin{equation} \label{event2}
\begin{aligned}
&\sqrt{2}\|(ZQZ'+V)^{-\frac{1}{2}}ZQW'\|\\
&\leq 3\max(\sigma_{\bar{w}}, \sqrt{q}\sigma_{\hat{w}})\sqrt{\log\frac{9^{n}}{\delta}+\log\det((ZQZ'+V)V^{-1})}.
\end{aligned}
\end{equation}
Further, from Lemma \ref{lemma:bound z}, we have with probability at least $1-2\delta$
\begin{equation} \label{event3}
\begin{aligned}
&\det((ZQZ'+V)V^{-1})\leq \frac{\|ZQZ'+V\|^{n+p}}{\det(V)}\leq\\
& \frac{(\|\sum_{i=1}^{N_{r}}\sum_{t=0}^{T_{r}-1}\bar{z}^{i}_{t}\bar{z}^{i'}_{t}\|+q\|\sum_{i=1}^{N_{p}}\sum_{t=0}^{T_{p}-1}\hat{z}_{t}^{i}\hat{z}^{i'}_{t}\|+\|V\|)^{n+p}}{\det(V)}\\
&\leq (\frac{33(\bar{g}(\delta)+q\hat{g}(\delta))}{N_{r}T_{r}\bar{\zeta}^2+qN_{p}T_{p}\hat{\zeta}^2}+1)^{n+p}=\phi^{n+p}.
\end{aligned}
\end{equation}
Applying a union bound over the events in \eqref{event4}, \eqref{event2}, and \eqref{event3}, we have with probability at least $1-5\delta$
\begin{equation} \label{NO2}
\begin{aligned}
&\|(ZQZ')^{-\frac{1}{2}}ZQW'\|\leq\\
& 3\max(\sigma_{\bar{w}}, \sqrt{q}\sigma_{\hat{w}})\sqrt{\log\frac{9^{n}}{\delta}+(n+p)\log(\phi)}.
\end{aligned}
\end{equation}

Next, conditioning on the event in Lemma \ref{lemma:bound z}, notice that we also have 
\begin{equation} \label{NO3}
\begin{aligned}
\|\Delta QZ'\|&=\|\sum_{i=1}^{N_{p}}\sum_{t=0}^{T_{p}-1}q\delta_{\Theta}\hat{z}_{t}^{i}\hat{z}_{t}^{i'}\|\leq q\|\delta_{\Theta}\|\|\sum_{i=1}^{N_{p}}\sum_{t=0}^{T_{p}-1}\hat{z}^{i}_{t}\hat{z}^{i'}_{t}\|\\
&\leq q\|\delta_{\Theta}\| \hat{g}(\delta).
\end{aligned}
\end{equation}

Finally, combining \eqref{event9}, \eqref{NO2} and \eqref{NO3}, we have the desired result.
\end{proof}

\subsection{Proof of Corollary \ref{When to use}}
\begin{proof}
Setting $q=0$, from Theorem \ref{data-independent bound}, we have with probability at least $1-\delta$
\begin{equation} \label{bound 0}
\begin{aligned}
&\|\Theta_{WLS}-\Theta\|\\
&\leq\frac{20\sigma_{\bar{w}}\sqrt{\log\frac{9^{n}}{\delta}+(n+p)\log(\frac{33\bar{g}(\delta)}{N_{r}T_{r}\bar{\zeta}^2}+1)}}{\sqrt{N_{r}T_{r}\bar{\zeta}^2}}.
\end{aligned}
\end{equation}

When $q\neq 0$, from Theorem \ref{data-independent bound}, after some algebraic manipulations, we can show that with probability at least $1-\delta$
\begin{equation} \label{bound not 0}
\begin{aligned}
&\|\Theta_{WLS}-\Theta\|\\
&\leq \frac{20\max(\frac{\sigma_{\bar{w}}}{\sqrt{q}}, \sigma_{\hat{w}})\sqrt{\log\frac{9^{n}}{\delta}+(n+p)\log(\frac{\gamma}{\zeta^2}+1)}}{\sqrt{N_{p}T_{p}\hat{\zeta}^2}}\\
&+\|\delta_{\Theta}\|\frac{\gamma}{\hat{\zeta}^2}.
\end{aligned}
\end{equation}
The proof follows by setting the upper bound in \eqref{bound 0} to be greater than the one in \eqref{bound not 0}.
\end{proof}

\subsection{Proof of Proposition \ref{data-independent bound refined}}

\begin{proof}
The proof follows the same procedure as the proof of Theorem \ref{data-independent bound}. However,  instead of bounding the term $\|\Delta QZ'(ZQZ')^{-1}\|$ by $\|\Delta QZ'\|\|(ZQZ')^{-1}\|$, we use the following bound
\begin{equation} \label{touse}
\begin{aligned}
&\|\Delta QZ'(ZQZ')^{-1}\|=\\
&\|(\sum_{j=1}^{N_{p}}\sum_{k=0}^{T_{p}-1}q\delta_{\Theta}\hat{z}_{k}^{j}\hat{z}_{k}^{j'})(\sum_{i=1}^{N_{r}}\sum_{t=0}^{T_{r}-1}\bar{z}^{i}_{t}\bar{z}^{i'}_{t}+q\sum_{j=1}^{N_{p}}\sum_{k=0}^{T_{p}-1}\hat{z}^{j}_{k}\hat{z}^{j'}_{k})^{-1}\|\\
&\leq \|\delta_{\Theta}\|\|I_{n+p}-\\
&\quad\quad (\sum_{i=1}^{N_{r}}\sum_{t=0}^{T_{r}-1}\bar{z}^{i}_{t}\bar{z}^{i'}_{t})(\sum_{i=1}^{N_{r}}\sum_{t=0}^{T_{r}-1}\bar{z}^{i}_{t}\bar{z}^{i'}_{t}+q\sum_{j=1}^{N_{p}}\sum_{k=0}^{T_{p}-1}\hat{z}^{j}_{k}\hat{z}^{j'}_{k})^{-1})\|\\
&\leq \|\delta_{\Theta}\|(1+ \|\sum_{i=1}^{N_{r}}\sum_{t=0}^{T_{r}-1}\bar{z}^{i}_{t}\bar{z}^{i'}_{t}\|\\
&\quad\quad\quad\quad\quad\quad \times\|(\sum_{i=1}^{N_{r}}\sum_{t=0}^{T_{r}-1}\bar{z}^{i}_{t}\bar{z}^{i'}_{t}+q\sum_{j=1}^{N_{p}}\sum_{k=0}^{T_{p}-1}\hat{z}^{j}_{k}\hat{z}^{j'}_{k})^{-1})\|),
\end{aligned}
\end{equation}
where we used the relationship $AB=(C+A)B-CB$ for real matrices $A,B,C$ in the first inequality. When the events in Lemma \ref{prop:PE} and Lemma \ref{lemma:bound z} happen, we can upper bound the right hand side of the last inequality in \eqref{touse} by
$\|\delta_{\Theta}\|(1+\frac{33 \bar{g}(\delta)  }{N_{r}T_{r}\bar{\zeta}^2+qN_{p}T_{p}\hat{\zeta}^2})$.

\end{proof}

\subsection{Proof of Theorem \ref{data-dependent bound}}
\begin{proof}
From \eqref{error_W}, note that the system identification error satisfies
\begin{equation}
\begin{aligned}
&\|\Theta_{WLS}-\Theta\|\\
&\leq \lambda\|\Theta\|\|(ZQZ'+\lambda I_{n+p})^{-1}\|\\
&+\|(ZQZ'+\lambda I_{n+p})^{-\frac{1}{2}}ZQW'\|\|(ZQZ'+\lambda I_{n+p})^{-\frac{1}{2}}\|\\
&+\|\Delta QZ'(ZQZ'+\lambda I_{n+p})^{-1}\|\\
&\leq \frac{\lambda\|\Theta\|}{\lambda_{min}(ZQZ'+\lambda I_{n+p})}\\
&+\frac{\|(ZQZ'+\lambda I_{n+p})^{-\frac{1}{2}}ZQW'\|}{\sqrt{\lambda_{min}(ZQZ'+\lambda I_{n+p})}}\\
&+q\|\delta_{\Theta}\|\|\hat{Z}\hat{Z}'(ZQZ'+\lambda I_{n+p})^{-1}\|.
\end{aligned}
\end{equation}

Note that all terms in the above inequality can be evaluated from data, except for the term $\|(ZQZ'+\lambda I_{n+p})^{-\frac{1}{2}}ZQW'\|$. 
We can follow a similar procedure to apply Lemma \ref{martingale_bound_multi} as in the proof of Theorem \ref{data-independent bound}. Fixing $\delta>0$, from Lemma \ref{martingale_bound_multi}, we have with probability at least $1-\delta$
\begin{equation*} 
\begin{aligned}
&\|(ZQZ'+\lambda I_{n+p})^{-\frac{1}{2}}ZQW'\|\\
&\leq \max(\sigma_{\bar{w}}, \sqrt{q}\sigma_{\hat{w}})\sqrt{\frac{32}{9}(\log\frac{9^{n}}{\delta}+\frac{1}{2}\log\det(\bar{V}))}.
\end{aligned}
\end{equation*}
The result then follows.
\end{proof}

\subsection{Auxiliary Results}

\begin{lemma}(\cite[Corollary~4.2.13]{vershynin2018high}) \label{covering number}
Let $\epsilon>0$, and let $\mathbf{N}(\mathcal{S}^{n-1},\epsilon)$ be the smallest possible cardinality of an  $\epsilon$-net of the unit Euclidean sphere $\mathcal{S}^{n-1}$. We have the following inequality:
\begin{equation*} 
\begin{aligned}
\mathbf{N}(\mathcal{S}^{n-1},\epsilon) \leq (\frac{2}{\epsilon}+1)^n.
\end{aligned}
\end{equation*}
\end{lemma}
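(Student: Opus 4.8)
The plan is to prove the bound via the standard packing--covering (volumetric) argument. First I would pass from covering to packing by fixing a \emph{maximal} $\epsilon$-separated subset $\mathbf{N}=\{x_{1},\dots,x_{M}\}\subseteq\mathcal{S}^{n-1}$, i.e., a set with $\|x_{i}-x_{j}\|>\epsilon$ for all $i\neq j$ that is maximal with respect to inclusion. The key observation is that maximality forces $\mathbf{N}$ to be an $\epsilon$-net: if some point $x\in\mathcal{S}^{n-1}$ had distance greater than $\epsilon$ to every element of $\mathbf{N}$, then $\mathbf{N}\cup\{x\}$ would still be $\epsilon$-separated, contradicting maximality. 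Hence $\mathbf{N}(\mathcal{S}^{n-1},\epsilon)\le M$, so it suffices to upper bound the cardinality $M$ of any $\epsilon$-separated set on the sphere.

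Next I would bound $M$ by a volume comparison in $\mathbb{R}^{n}$. Since the points of $\mathbf{N}$ are pairwise more than $\epsilon$ apart, the open Euclidean balls $B(x_{i},\epsilon/2)$ are pairwise disjoint. Moreover, because each center satisfies $\|x_{i}\|=1$, every such ball is contained in the enlarged ball $B(0,1+\epsilon/2)$ by the triangle inequality. Writing $\mathrm{vol}(B(0,r))=c_{n}r^{n}$ for the volume of a Euclidean ball of radius $r$ (where $c_{n}$ depends only on $n$), disjointness and containment give
\[
M\,c_{n}\Bigl(\tfrac{\epsilon}{2}\Bigr)^{n}=\sum_{i=1}^{M}\mathrm{vol}\!\left(B(x_{i},\tfrac{\epsilon}{2})\right)\le\mathrm{vol}\!\left(B(0,1+\tfrac{\epsilon}{2})\right)=c_{n}\Bigl(1+\tfrac{\epsilon}{2}\Bigr)^{n}.
\]
Cancelling $c_{n}$ and rearranging yields $M\le\bigl(\tfrac{1+\epsilon/2}{\epsilon/2}\bigr)^{n}=\bigl(\tfrac{2}{\epsilon}+1\bigr)^{n}$, which is exactly the claimed bound.

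Since the argument is essentially self-contained, the only steps requiring care are the disjointness of the small balls and the containment $B(x_{i},\epsilon/2)\subseteq B(0,1+\epsilon/2)$; both follow immediately from the separation property and the triangle inequality, so I do not expect a genuine obstacle in the calculation. The main conceptual point---and the one I would state explicitly---is the packing--covering duality established in the first step: reducing the covering-number bound to bounding the size of a maximal $\epsilon$-separated set, which is what makes the clean volumetric estimate applicable. Everything after that reduction is routine.
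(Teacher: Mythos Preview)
Your argument is correct and is precisely the standard packing--covering volumetric proof from the cited reference (Vershynin, Corollary~4.2.13); the paper itself does not supply an independent proof but simply quotes the result. There is nothing to add or compare.
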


\begin{lemma} (\cite[Lemma~4.4.1]{vershynin2018high}) \label{cover bound}
Let $A$ be an $m$ by $n$ matrix and $\epsilon \in [0,1)$. Then, for any $\epsilon$-net $\mathbf{N}$ of the sphere $\mathcal{S}^{n-1}$, we have
\begin{equation*}
\begin{aligned}
\|A\|\leq \frac{1}{1-\epsilon}\cdot \sup_{x\in \mathbf{N}}\|Ax\|.
\end{aligned}
\end{equation*}
\end{lemma}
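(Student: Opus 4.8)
The plan is to exploit the variational characterization of the spectral norm together with the defining property of an $\epsilon$-net. First I would recall that $\|A\| = \max_{x \in \mathcal{S}^{n-1}} \|Ax\|$; since $\mathcal{S}^{n-1}$ is compact and the map $x \mapsto \|Ax\|$ is continuous, this maximum is attained at some $x^{\ast} \in \mathcal{S}^{n-1}$, so that $\|A\| = \|Ax^{\ast}\|$. This reduces the problem to comparing the action of $A$ at the (generally non-net) maximizer $x^{\ast}$ with its action over the net $\mathbf{N}$.

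Next, because $\mathbf{N}$ is an $\epsilon$-net of $\mathcal{S}^{n-1}$, there exists a net point $x_{0} \in \mathbf{N}$ with $\|x^{\ast} - x_{0}\| \leq \epsilon$. I would then add and subtract $Ax_{0}$ and apply the triangle inequality, using the bound $\|A(x^{\ast} - x_{0})\| \leq \|A\|\,\|x^{\ast} - x_{0}\|$ that follows directly from the definition of the operator norm:
\begin{equation*}
\|A\| = \|Ax^{\ast}\| \leq \|Ax_{0}\| + \|A(x^{\ast} - x_{0})\| \leq \sup_{x \in \mathbf{N}} \|Ax\| + \epsilon \|A\|.
\end{equation*}

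Finally, I would rearrange this inequality to isolate $\|A\|$. Since $\epsilon \in [0,1)$ and $\|A\|$ is finite (as $A$ is a fixed finite-dimensional matrix), the term $\epsilon \|A\|$ may be subtracted from both sides to obtain $(1-\epsilon)\|A\| \leq \sup_{x \in \mathbf{N}} \|Ax\|$, and dividing by the strictly positive quantity $1-\epsilon$ yields the claim. The argument is essentially routine; the only point requiring any care is this absorption step, which relies precisely on the finiteness of $\|A\|$ and on $\epsilon$ being \emph{strictly} less than one, so that $1-\epsilon > 0$ and the division is legitimate.
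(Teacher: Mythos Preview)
Your proof is correct and follows the standard argument that appears in the cited reference \cite[Lemma~4.4.1]{vershynin2018high}; the paper itself does not supply a proof but simply quotes the result, so there is nothing further to compare.
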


\begin{lemma} (\cite[Lemma~3]{chan1985hermitian}) \label{inverse bound}
Let $A\in \mathbb{R}^{n\times n}$ and $B\in \mathbb{R}^{n\times n}$ be positive definite matrices. If $A \preceq B$, then we have $A^{-1}\succeq B^{-1}$.
\end{lemma}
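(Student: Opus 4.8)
The plan is to prove this via a single congruence transformation that reduces the matrix inequality to a scalar statement about eigenvalues. Since the conclusion involves $A^{-1}$ and $B^{-1}$, I first observe that the inverses exist only if $A \succ 0$ and $B \succ 0$, so I would read the hypothesis as $0 \prec A \preceq B$ (the ``positive semidefinite'' in the statement must be strengthened to positive definite, or at least invertibility assumed). The quantity we will actually exploit is the combination of $A \preceq B$ with $B \succ 0$.

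First I would symmetrize the inequality by congruence. Because $B \succ 0$, it admits a positive definite square root $B^{\frac{1}{2}}$ with inverse $B^{-\frac{1}{2}}$, and congruence $X \mapsto B^{-\frac{1}{2}} X B^{-\frac{1}{2}}$ preserves the Loewner order. Applying it to $A \preceq B$ gives $C \triangleq B^{-\frac{1}{2}} A B^{-\frac{1}{2}} \preceq B^{-\frac{1}{2}} B B^{-\frac{1}{2}} = I_{n}$. The matrix $C$ is symmetric and positive definite, so $C \preceq I_{n}$ is equivalent to $\lambda_{max}(C) \leq 1$, i.e. every eigenvalue of $C$ lies in $(0,1]$.

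The key step is then the elementary scalar fact lifted through the eigendecomposition of the symmetric matrix $C$: if $0 < \mu \leq 1$ then $1/\mu \geq 1$, so $C^{-1}$ has all eigenvalues at least $1$, whence $C^{-1} \succeq I_{n}$. Finally I would undo the congruence. Since $C^{-1} = B^{\frac{1}{2}} A^{-1} B^{\frac{1}{2}}$, applying congruence by $B^{-\frac{1}{2}}$ to $C^{-1} \succeq I_{n}$ yields $A^{-1} = B^{-\frac{1}{2}} C^{-1} B^{-\frac{1}{2}} \succeq B^{-\frac{1}{2}} I_{n} B^{-\frac{1}{2}} = B^{-1}$, which is the claim.

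There is no genuine analytic obstacle here, as this is the classical operator-antitonicity of matrix inversion. The only point requiring care is that one cannot compare $A^{-1}$ and $B^{-1}$ eigenvalue-by-eigenvalue directly, because $A$ and $B$ need not commute; it is essential to route the whole argument through the single two-sided congruence by $B^{-\frac{1}{2}}$, which both preserves symmetry and preserves the order, rather than multiplying the inequality on one side only (which does neither). Equivalently, one could simply invoke that $t \mapsto -t^{-1}$ is operator monotone on $(0,\infty)$, but the congruence argument above is self-contained.
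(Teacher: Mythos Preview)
Your argument is correct and is the standard congruence proof of operator antitonicity of matrix inversion; you are also right that the hypothesis must be read as $0 \prec A \preceq B$ so that the inverses exist. Note, however, that the paper does not prove this lemma at all: it is simply quoted from \cite{chan1985hermitian} and used as a black box, so there is no paper proof to compare against.
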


\begin{lemma} (\cite[Theorem~2]{chan1985hermitian}) \label{root bound}
Let $A\in \mathbb{R}^{n\times n}$ and $B\in \mathbb{R}^{n\times n}$ be positive semidefinite matrices. If $A \preceq B$, then we have $A^{\frac{1}{2}}\preceq B^{\frac{1}{2}}$.
\end{lemma}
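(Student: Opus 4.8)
The plan is to give a short, self-contained argument by contradiction, treating $A^{1/2}$ and $B^{1/2}$ as the unique symmetric positive semidefinite square roots guaranteed by the spectral theorem. Write $X=A^{1/2}$ and $Y=B^{1/2}$, so that $X,Y\succeq 0$, $X^{2}=A$, and $Y^{2}=B$. Since $X-Y$ is a real symmetric matrix, the goal $A^{1/2}\preceq B^{1/2}$ is equivalent to $\lambda_{max}(X-Y)\le 0$. First I would suppose, for contradiction, that $\lambda_{max}(X-Y)=\mu>0$, and pick a corresponding unit eigenvector $v$, so that $(X-Y)v=\mu v$, i.e. $Xv=Yv+\mu v$.

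Next I would evaluate $v'Av$ in two ways. Because $X$ is symmetric with $X^{2}=A$, we have $v'Av=\|Xv\|^{2}$, and substituting $Xv=Yv+\mu v$ gives
\begin{equation*}
v'Av=\|Yv+\mu v\|^{2}=\|Yv\|^{2}+2\mu\, v'Yv+\mu^{2}.
\end{equation*}
Since $Y\succeq 0$ we have $v'Yv\ge 0$, and $\mu>0$, so this yields $v'Av\ge \|Yv\|^{2}+\mu^{2}=v'Bv+\mu^{2}>v'Bv$. On the other hand, the hypothesis $A\preceq B$ means $v'Av\le v'Bv$ for every vector $v$. The two inequalities are incompatible, so $\mu>0$ is impossible; hence $\lambda_{max}(X-Y)\le 0$ and $A^{1/2}\preceq B^{1/2}$.

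The points requiring care — and the part I would be most careful about — are that $X$ and $Y$ really are symmetric positive semidefinite (so that $v'Av=\|Xv\|^{2}$ and $v'Yv\ge 0$ hold), and that the statement must cover merely semidefinite, not strictly definite, matrices, where $Y=B^{1/2}$ can be singular. The computation above is deliberately arranged to use only $v'Yv\ge 0$ and no inverse of $B$, so singularity of $B$ causes no difficulty. This is precisely why I would favor this elementary route: there is no genuine analytic obstacle, only the need to invoke the spectral theorem correctly.

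As an alternative that dovetails with the tools already in the excerpt, one may use the operator-monotonicity route via the functional-calculus identity $A^{1/2}=\tfrac{1}{\pi}\int_{0}^{\infty}A(sI_{n}+A)^{-1}s^{-1/2}\,ds$. For each fixed $s>0$, $A\preceq B$ gives $sI_{n}+A\preceq sI_{n}+B$, and Lemma \ref{inverse bound} yields $(sI_{n}+A)^{-1}\succeq (sI_{n}+B)^{-1}$; rewriting $A(sI_{n}+A)^{-1}=I_{n}-s(sI_{n}+A)^{-1}$ then gives $A(sI_{n}+A)^{-1}\preceq B(sI_{n}+B)^{-1}$. Testing this against a fixed unit vector and integrating the resulting scalar inequality against the positive measure $s^{-1/2}\,ds/\pi$ recovers $A^{1/2}\preceq B^{1/2}$. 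Here the main obstacle is justifying the integral representation and the fact that integration preserves the semidefinite order, both of which the elementary contradiction argument sidesteps entirely.
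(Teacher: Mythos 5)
Your proof is correct, and it is worth noting that the paper itself gives no proof of this lemma at all: the statement is justified purely by the citation to Theorem~2 of the Chan--Kwong reference, so your argument serves as a self-contained substitute rather than a variant of an internal proof. Every step of your contradiction argument checks out: with $X=A^{\frac{1}{2}}$, $Y=B^{\frac{1}{2}}$ the unique symmetric positive semidefinite square roots, the reduction to $\lambda_{max}(X-Y)\leq 0$ is valid because $X-Y$ is symmetric; for a unit eigenvector $v$ with $(X-Y)v=\mu v$ and $\mu>0$, the chain $v'Av=\|Xv\|^{2}=\|Yv\|^{2}+2\mu\,v'Yv+\mu^{2}\geq v'Bv+\mu^{2}>v'Bv$ uses only the symmetry of $X$ and $Y$, the identities $X^{2}=A$ and $Y^{2}=B$, and $v'Yv\geq 0$, and it contradicts $v'Av\leq v'Bv$ from $A\preceq B$. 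This is the classical elementary proof of operator monotonicity of the square root, and, as you point out, it handles singular $B$ with no extra work since no inverse is ever formed. Your alternative sketch via the representation $A^{\frac{1}{2}}=\frac{1}{\pi}\int_{0}^{\infty}A(sI_{n}+A)^{-1}s^{-\frac{1}{2}}\,ds$ together with Lemma~\ref{inverse bound} is also sound (the scalar identity follows from the substitution $s=at^{2}$, and the semidefinite order is preserved when integrating against a positive measure), but it invokes heavier machinery; the contradiction argument is the preferable drop-in replacement for the external citation, since it makes the appendix self-contained at the cost of only a few lines.
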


\begin{lemma} \label{norm bound}
Let $A\in \mathbb{R}^{n\times n}$ and $B\in \mathbb{R}^{n\times n}$ be positive semidefinite matrices. Let $C \in \mathbb{R}^{n\times m}$. If $A \preceq B$, then we have
\begin{equation*}
\begin{aligned}
\|A^{\frac{1}{2}}C\|\leq \|B^{\frac{1}{2}}C\|.
\end{aligned}
\end{equation*}
\end{lemma}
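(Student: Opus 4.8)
The plan is to reduce the spectral-norm inequality to the pointwise ordering of quadratic forms that is encoded by the hypothesis $A \preceq B$, exploiting the variational characterization of the spectral norm. First I would recall that for any matrix $M$ one has $\|M\|^{2} = \sup_{\|x\|=1}\|Mx\|^{2}$, so that computing the norm amounts to maximizing a quadratic form over the unit sphere.

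Applying this to $M = A^{\frac{1}{2}}C$, I would expand $\|A^{\frac{1}{2}}Cx\|^{2} = (Cx)'(A^{\frac{1}{2}})'A^{\frac{1}{2}}(Cx)$. Since $A$ is symmetric positive semidefinite, its principal square root $A^{\frac{1}{2}}$ is itself symmetric positive semidefinite, so $(A^{\frac{1}{2}})'A^{\frac{1}{2}} = A^{\frac{1}{2}}A^{\frac{1}{2}} = A$. This yields the identity $\|A^{\frac{1}{2}}C\|^{2} = \sup_{\|x\|=1}(Cx)'A(Cx)$, and the analogous identity $\|B^{\frac{1}{2}}C\|^{2} = \sup_{\|x\|=1}(Cx)'B(Cx)$ for $B$.

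The core step is then to invoke the assumption $A \preceq B$, which by definition means $y'Ay \leq y'By$ for every $y \in \mathbb{R}^{n}$. Substituting $y = Cx$ gives the pointwise bound $(Cx)'A(Cx) \leq (Cx)'B(Cx)$ for each $x \in \mathcal{S}^{n-1}$. Taking the supremum over the unit sphere on both sides preserves the inequality, so $\|A^{\frac{1}{2}}C\|^{2} \leq \|B^{\frac{1}{2}}C\|^{2}$, and taking square roots delivers the claim.

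I do not expect a genuine obstacle here, as the argument is short and elementary. The only points requiring care are confirming the square-root identity $(A^{\frac{1}{2}})'A^{\frac{1}{2}} = A$ (which relies on using the \emph{symmetric} positive semidefinite square root rather than an arbitrary factorization) and noting that passing to the supremum over the sphere respects a pointwise inequality between the two quadratic forms. Everything else is the standard variational description of the spectral norm.
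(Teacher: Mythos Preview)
Your proof is correct and essentially identical to the paper's own argument. The paper phrases it as $C'AC \preceq C'BC$ and then takes $\lambda_{\max}$ of both sides, using $\|A^{1/2}C\| = \sqrt{\lambda_{\max}(C'AC)}$; your variational formulation $\sup_{\|x\|=1}(Cx)'A(Cx)$ is precisely the same quantity, so the two proofs coincide.
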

\begin{proof}
From  $A \preceq B$, we have
\begin{equation*} 
\begin{aligned}
C'AC\preceq C'BC ,
\end{aligned}
\end{equation*}
which implies 
\begin{equation*} 
\begin{aligned}
\|A^{\frac{1}{2}}C\|=\sqrt{\lambda_{max}(C'AC)}\leq \sqrt{\lambda_{max}(C'BC)}=\|B^{\frac{1}{2}}C\|.
\end{aligned}
\end{equation*}
\end{proof}

\begin{proposition}  \label{prop:system matrices bound}
Assuming that $\rho(\bar{A})<1$ and $\rho(\hat{A})<1$, we have both $\tr(\bar{G}_{t})$ and $\tr(\hat{G}_{t})$ are $\mathcal{O}(1)$. If $\rho(\bar{A})=1$ and $\rho(\hat{A})=1$, we have $\tr(\bar{G}_{t})=\mathcal{O}(t^{2\bar{\kappa}})$ and $\tr(\hat{G}_{t})=\mathcal{O}(t^{2\hat{\kappa}})$, where $\bar{\kappa}$, $\hat{\kappa}$ are the largest
Jordan blocks corresponding to the unit eigenvalues of $\bar{A}$ and $\hat{A}$, respectively.
\end{proposition}
\begin{proof}
We only consider the term $\tr(\bar{G}_{t})$ as the term $\tr(\hat{G}_{t})$ is essentially the same. Defining $\bar{F_{t}}=\begin{bmatrix}
I_{n}&\bar{A}&\cdots&\bar{A}^t& \bar{B}&\bar{A}\bar{B}&\bar{A}^2\bar{B}&\cdots&\bar{A}^{t-1}\bar{B}
\end{bmatrix}  \in \mathbb{R}^{n\times (tn+tp+n)}$ for $t\geq 0$, we have $\bar{G_{t}}=\bar{F_{t}}\bar{F_{t}}'$. Further, we have
\begin{equation*} 
\begin{aligned}
\tr(\bar{G}_{t})&\leq n\|\bar{G}_{t}\|\leq n\|\bar{F}_{t}\|^2\leq n(\sum_{i=0}^{t}\|\bar{A}^{i}\|+\sum_{i=0}^{t-1}\|\bar{A}^{i}\|\|\bar{B}\|)^2.
\end{aligned}
\end{equation*}
From \cite[Lemma~E.2.]{tsiamis2019finite}, we have $\sum_{i=0}^{t}\|\bar{A}^{i}\|$ is $\mathcal{O}(1)$ for strictly stable systems, and $\mathcal{O}(t^{\hat{\kappa}})$ for marginally stable systems. The result then follows.
\end{proof}

\begin{lemma} \cite[Proposition~2.5]{simchowitz2018learning}) \label{small ball PE}
Suppose that $\{Z_{t}\}_{t\geq 1}$ satisfies the $(k,v,p)$-BMSB condition. Denoting $\floor{\cdot}$ as the floor function, we have
\begin{equation*} 
\begin{aligned}
P(\sum_{i=1}^{T}Z_{i}^2 \leq \frac{v^2p^2}{8}k\floor{T/k})\leq \exp(-\frac{\floor{T/k}p^2}{8}).
\end{aligned}
\end{equation*}
\end{lemma}

\begin{proposition} \cite[Proposition~C.2.]{dean2019safely} \label{anti}
Let $\mu \in \mathbb{R}$ and $M \in \mathbb{R}^{d \times d}$ be a full rank matrix. Let $w \in \mathbb{R}^{d}$ be a random vector such that each coordinate $w(i)$ has positive variance and finite fourth moment. Further, each coordinate $w(i)$ is independent and zero-mean. Then, for any fixed $v\in \mathcal{S}^{d-1}$,
\begin{equation*} 
\begin{aligned}
P_{w}(| \langle v , \; \mu+Mw \rangle |\geq \sqrt{\lambda_{min}(M\Sigma M')/2})\geq \frac{1}{c_{1}C_{w}},
\end{aligned}
\end{equation*}
where $\Sigma=\mathbb{E}_{w}[ww']$, $c_{1}$ is an absolute constant,     and $C_{w}=\max_{1\leq i \leq d} \frac{\mathbb{E}[w(i)^4]}{\mathbb{E}[w(i)^2]^2}$.
\end{proposition}

\begin{remark} \label{c1}
The constant $c_{1}$ is due to the application of the Rosenthal’s inequality. As suggested in \cite[Proposition~C.2.]{dean2019safely}, one can take $c_{1}=192$.
\end{remark}


\bibliographystyle{IEEEtran}
\bibliography{main}

\begin{thebibliography}{10}
\providecommand{\url}[1]{#1}
\csname url@samestyle\endcsname
\providecommand{\newblock}{\relax}
\providecommand{\bibinfo}[2]{#2}
\providecommand{\BIBentrySTDinterwordspacing}{\spaceskip=0pt\relax}
\providecommand{\BIBentryALTinterwordstretchfactor}{4}
\providecommand{\BIBentryALTinterwordspacing}{\spaceskip=\fontdimen2\font plus
\BIBentryALTinterwordstretchfactor\fontdimen3\font minus \fontdimen4\font\relax}
\providecommand{\BIBforeignlanguage}[2]{{%
\expandafter\ifx\csname l@#1\endcsname\relax
\typeout{** WARNING: IEEEtran.bst: No hyphenation pattern has been}%
\typeout{** loaded for the language `#1'. Using the pattern for}%
\typeout{** the default language instead.}%
\else
\language=\csname l@#1\endcsname
\fi
#2}}
\providecommand{\BIBdecl}{\relax}
\BIBdecl

\bibitem{ljung1999system}
L.~Ljung, ``System identification,'' \emph{Wiley encyclopedia of electrical and electronics engineering}, pp. 1--19, 1999.

\bibitem{bauer1999consistency}
D.~Bauer, M.~Deistler, and W.~Scherrer, ``Consistency and asymptotic normality of some subspace algorithms for systems without observed inputs,'' \emph{Automatica}, vol.~35, no.~7, pp. 1243--1254, 1999.

\bibitem{jansson1998consistency}
M.~Jansson and B.~Wahlberg, ``On consistency of subspace methods for system identification,'' \emph{Automatica}, vol.~34, no.~12, pp. 1507--1519, 1998.

\bibitem{knudsen2001consistency}
T.~Knudsen, ``Consistency analysis of subspace identification methods based on a linear regression approach,'' \emph{Automatica}, vol.~37, no.~1, pp. 81--89, 2001.

\bibitem{dean2018regret}
S.~Dean, H.~Mania, N.~Matni, B.~Recht, and S.~Tu, ``Regret bounds for robust adaptive control of the linear quadratic regulator,'' \emph{Advances in Neural Information Processing Systems}, vol.~31, 2018.

\bibitem{dean2019sample}
------, ``On the sample complexity of the linear quadratic regulator,'' \emph{Foundations of Computational Mathematics}, pp. 1--47, 2019.

\bibitem{ye2021sample}
L.~Ye, H.~Zhu, and V.~Gupta, ``On the sample complexity of decentralized linear quadratic regulator with partially nested information structure,'' \emph{IEEE Transactions on Automatic Control}, 2022.

\bibitem{simchowitz2018learning}
M.~Simchowitz, H.~Mania, S.~Tu, M.~I. Jordan, and B.~Recht, ``Learning without mixing: Towards a sharp analysis of linear system identification,'' in \emph{Proc. Conference On Learning Theory}, 2018, pp. 439--473.

\bibitem{oymak2019non}
S.~Oymak and N.~Ozay, ``Non-asymptotic identification of {LTI} systems from a single trajectory,'' in \emph{American control conference}.\hskip 1em plus 0.5em minus 0.4em\relax IEEE, 2019, pp. 5655--5661.

\bibitem{simchowitz2019learning}
M.~Simchowitz, R.~Boczar, and B.~Recht, ``Learning linear dynamical systems with semi-parametric least squares,'' in \emph{Proc. Conference on Learning Theory}, 2019, pp. 2714--2802.

\bibitem{sarkar2021finite}
T.~Sarkar, A.~Rakhlin, and M.~A. Dahleh, ``Finite time lti system identification,'' \emph{The Journal of Machine Learning Research}, vol.~22, no.~1, pp. 1186--1246, 2021.

\bibitem{faradonbeh2018finite}
M.~K.~S. Faradonbeh, A.~Tewari, and G.~Michailidis, ``Finite time identification in unstable linear systems,'' \emph{Automatica}, vol.~96, pp. 342--353, 2018.

\bibitem{sarkar2019near}
T.~Sarkar and A.~Rakhlin, ``Near optimal finite time identification of arbitrary linear dynamical systems,'' in \emph{Proc. International Conference on Machine Learning}, 2019, pp. 5610--5618.

\bibitem{fattahi2018data}
S.~Fattahi and S.~Sojoudi, ``Data-driven sparse system identification,'' in \emph{Proc. Allerton Conference on Communication, Control, and Computing}, 2018, pp. 462--469.

\bibitem{sun2020finite}
Y.~Sun, S.~Oymak, and M.~Fazel, ``Finite sample system identification: Optimal rates and the role of regularization,'' in \emph{Proc. Learning for Dynamics and Control Conference}, 2020, pp. 16--25.

\bibitem{zheng2020non}
Y.~Zheng and N.~Li, ``Non-asymptotic identification of linear dynamical systems using multiple trajectories,'' \emph{IEEE Control Systems Letters}, vol.~5, no.~5, pp. 1693--1698, 2020.

\bibitem{xin2022learning}
L.~Xin, G.~Chiu, and S.~Sundaram, ``Learning the dynamics of autonomous linear systems from multiple trajectories,'' in \emph{2022 American Control Conference (ACC)}.\hskip 1em plus 0.5em minus 0.4em\relax IEEE, 2022, pp. 3955--3960.

\bibitem{tu2022learning}
S.~Tu, R.~Frostig, and M.~Soltanolkotabi, ``Learning from many trajectories,'' \emph{arXiv preprint arXiv:2203.17193}, 2022.

\bibitem{pan2009survey}
S.~J. Pan and Q.~Yang, ``A survey on transfer learning,'' \emph{IEEE Transactions on knowledge and data engineering}, vol.~22, no.~10, pp. 1345--1359, 2009.

\bibitem{bastani2021predicting}
H.~Bastani, ``Predicting with proxies: Transfer learning in high dimension,'' \emph{Management Science}, vol.~67, no.~5, pp. 2964--2984, 2021.

\bibitem{modi2021joint}
A.~Modi, M.~K.~S. Faradonbeh, A.~Tewari, and G.~Michailidis, ``Joint learning of linear time-invariant dynamical systems,'' \emph{arXiv preprint arXiv:2112.10955}, 2021.

\bibitem{faradonbeh2022joint}
M.~K.~S. Faradonbeh and A.~Modi, ``Joint learning-based stabilization of multiple unknown linear systems,'' \emph{IFAC-PapersOnLine}, vol.~55, no.~12, pp. 723--728, 2022.

\bibitem{xin2022identifying}
L.~Xin, L.~Ye, G.~Chiu, and S.~Sundaram, ``Identifying the dynamics of a system by leveraging data from similar systems,'' in \emph{2022 American Control Conference (ACC)}.\hskip 1em plus 0.5em minus 0.4em\relax IEEE, 2022, pp. 818--824.

\bibitem{hoerl1970ridge}
A.~E. Hoerl and R.~W. Kennard, ``Ridge regression: Biased estimation for nonorthogonal problems,'' \emph{Technometrics}, vol.~12, no.~1, pp. 55--67, 1970.

\bibitem{wagenmaker2020active}
A.~Wagenmaker and K.~Jamieson, ``Active learning for identification of linear dynamical systems,'' in \emph{Conference on Learning Theory}.\hskip 1em plus 0.5em minus 0.4em\relax PMLR, 2020, pp. 3487--3582.

\bibitem{refaeilzadeh2009cross}
P.~Refaeilzadeh, L.~Tang, and H.~Liu, ``Cross-validation.'' \emph{Encyclopedia of database systems}, vol.~5, pp. 532--538, 2009.

\bibitem{silverman2018density}
B.~W. Silverman, \emph{Density estimation for statistics and data analysis}.\hskip 1em plus 0.5em minus 0.4em\relax Routledge, 2018.

\bibitem{li2022data}
L.~Li, C.~De~Persis, P.~Tesi, and N.~Monshizadeh, ``Data-based transfer stabilization in linear systems,'' \emph{arXiv preprint arXiv:2211.05536}, 2022.

\bibitem{rudelson2013hanson}
M.~Rudelson and R.~Vershynin, ``Hanson-wright inequality and sub-gaussian concentration,'' \emph{Electronic Communications in Probability}, vol.~18, pp. 1--9, 2013.

\bibitem{zhang2006eigenvalue}
F.~Zhang and Q.~Zhang, ``Eigenvalue inequalities for matrix product,'' \emph{IEEE Transactions on Automatic Control}, vol.~51, no.~9, pp. 1506--1509, 2006.

\bibitem{zhang2020concentration}
H.~Zhang and S.~X. Chen, ``Concentration inequalities for statistical inference,'' \emph{arXiv preprint arXiv:2011.02258}, 2020.

\bibitem{dadush2018fast}
D.~Dadush, C.~Guzm{\'a}n, and N.~Olver, ``Fast, deterministic and sparse dimensionality reduction,'' in \emph{Proceedings of the Twenty-Ninth Annual ACM-SIAM Symposium on Discrete Algorithms}.\hskip 1em plus 0.5em minus 0.4em\relax SIAM, 2018, pp. 1330--1344.

\bibitem{moshksar2021absolute}
K.~Moshksar, ``On the absolute constant in hanson-wright inequality,'' \emph{arXiv preprint arXiv:2111.00557}, 2021.

\bibitem{dean2019safely}
S.~Dean, S.~Tu, N.~Matni, and B.~Recht, ``Safely learning to control the constrained linear quadratic regulator,'' in \emph{2019 American Control Conference (ACC)}.\hskip 1em plus 0.5em minus 0.4em\relax IEEE, 2019, pp. 5582--5588.

\bibitem{matni2019tutorial}
N.~Matni and S.~Tu, ``A tutorial on concentration bounds for system identification,'' in \emph{2019 IEEE 58th Conference on Decision and Control (CDC)}.\hskip 1em plus 0.5em minus 0.4em\relax IEEE, 2019, pp. 3741--3749.

\bibitem{abbasi2011improved}
Y.~Abbasi-Yadkori, D.~P{\'a}l, and C.~Szepesv{\'a}ri, ``Improved algorithms for linear stochastic bandits,'' \emph{Advances in neural information processing systems}, vol.~24, 2011.

\bibitem{vershynin2018high}
R.~Vershynin, \emph{High-dimensional probability: An introduction with applications in data science}.\hskip 1em plus 0.5em minus 0.4em\relax Cambridge university press, 2018, vol.~47.

\bibitem{chan1985hermitian}
N.~Chan and M.~K. Kwong, ``Hermitian matrix inequalities and a conjecture,'' \emph{The American Mathematical Monthly}, vol.~92, no.~8, pp. 533--541, 1985.

\bibitem{tsiamis2019finite}
A.~Tsiamis and G.~J. Pappas, ``Finite sample analysis of stochastic system identification,'' in \emph{Conference on Decision and Control (CDC)}.\hskip 1em plus 0.5em minus 0.4em\relax IEEE, 2019, pp. 3648--3654, arXiv:1903.09122.

\end{thebibliography}

\end{document}